\newcommand{\omitme}[1]{}
\def\R{\mathbb{R}}
\def\1{\mathbf{1}}
\def\diag{\mbox{\boldmath{diag}}}
\DeclareMathOperator*{\argmin}{arg\,min}
\newcommand{\change}[1]{\color{black}{#1}}
\title{Combinatorial Continuous Maximum Flow}
\author{Camille Couprie \thanks{Universit\'e Paris-Est, Laboratoire
    d'Informatique Gaspard-Monge, Equipe A3SI, ESIEE Paris (93160
    Noisy-le-Grand, France) ({\tt c.couprie@esiee.fr}, {\tt
       h.talbot@esiee.fr}, {\tt
      l.najman@esiee.fr} ).}
        \and Leo Grady \thanks{Siemens Corporate Research, Department
          of Imaging and Visualization, Princeton, N.J. 08540. USA) ({\tt leo.grady@siemens.com}).}
        \and Hugues Talbot~$^*$
        \and Laurent Najman~$^*$ }
\begin{document}

\maketitle

\begin{abstract}
  Maximum flow (and minimum cut) algorithms have had a strong impact
  on computer vision. In particular, graph cuts algorithms provide a
  mechanism for the discrete optimization of an energy functional
  which has been used in a variety of applications such as image
  segmentation, stereo, image stitching and texture
  synthesis. Algorithms based on the classical formulation of max-flow
  defined on a graph are known to exhibit metrication artefacts in the
  solution.  Therefore, a recent trend has been to instead employ a
  spatially \emph{continuous} maximum flow (or the dual min-cut
  problem) in these same applications to produce solutions with no
  metrication errors. However, known fast continuous max-flow
  algorithms have no stopping criteria or have not been proved to
  converge.  In this work, we revisit the continuous max-flow problem
  and show that the analogous discrete formulation is \emph{different}
  from the classical max-flow problem. We then apply an appropriate
  combinatorial optimization technique to this combinatorial
  continuous max-flow (CCMF) problem to find a null-divergence
  solution that exhibits no metrication artefacts and may be solved
  exactly by a fast, efficient algorithm with provable
  convergence. Finally, by exhibiting the dual problem of our
    CCMF formulation, we clarify the fact, already proved by Nozawa in
    the continuous setting, that the max-flow and the total variation
    problems are not always equivalent.

\end{abstract}

\section{Introduction}

Optimization methods have been used to address a wide variety of
problems in computer vision.  The early optimization approaches were
formulated in terms of active contours and surfaces~\cite{snakes1987}
and then later level sets~\cite{sethian_book}. These formulations were
used to optimize energies of varied sophistication (e.g., using
regional, texture, motion or contour terms~\cite{paragios_thesis}) but
generally converged to a local minimum, generating results that were
sensitive to initial conditions and noise levels. Consequently, more
recent focus has been on energy formulations (and optimization
algorithms) for which a global optimum can be found.

Such energy formulations typically include a term which minimizes the boundary length
(or surface area in 3D) of a region or the total variation of a scalar field in
addition to a data term and/or hard constraints.  In this paper, we focus on image
segmentation as the example application on which to base our exposition. Indeed,
segmentation has played a prominent (and early) role in the development of the
use of global optimization methods in computer vision, and often forms the basis
of many other
applications~\cite{Boykov_Y_2001_tpami_fas_aemvgc,Ishikawa_H_2003_tpami_exact_omrfcp,
  darbon-sigelle-JMIV2006b}.

\subsection*{Graph-based approaches} 
The max-flow/min-cut problem on a graph is a classical problem in
graph theory, for which the earliest solution algorithm goes back to
Ford and Fulkerson~\cite{FordFulk}. Initial methods for global
optimization of the boundary length of a region formulated the energy
on a graph and relied on max-flow/min-cut methods for
solution~\cite{Boykov2001Interactive,energyKolmo}.  It was soon
realized that these methods introduced a metrication error. 
  Metrication errors are clearly visible when gradient information is
  weak or missing, for example in the case of medical image
  segmentation or in some materials science applications like electron
  tomography, where weak edges are unavoidable features of the imaging
  modality. Metrication errors are also far more obvious in 3D than in 2D and
  increasingly so, as the dimensionality increases (see for example
  the 3D-lungs example in \cite{hugues}).  Furthermore, metrication
  artifacts are even more present in other applications such as
  surface reconstruction and rendering, where the artifacts are a lot
  worse than in image segmentation. Various solutions for
metrication errors were proposed.  One solution involved the use of a
highly connected lattice with a specialty edge
weighting~\cite{Boykov03minsurf}, but the large number of graph edges
required to implement this solution could cause memory concerns when
implemented for large 2D or 3D images~\cite{hugues}.

\subsection*{Continuous approaches}
To avoid the metrication artefacts without increasing memory
usage, one trend in recent years has been to pursue spatially
\emph{continuous} formulations of the boundary length and the related
problem of total variation
\cite{chambolle2004,hugues,nikolova2006}. Historically, a continuous
max-flow (and dual min-cut problem) was formulated by Iri
  \cite{iri} and then Strang~\cite{strang1983}. Strang's continuous
formulation provided an example of a \emph{continuization} (as opposed
to \emph{discretization}) of a classically discrete problem, but was
not associated to any algorithm. Work by Appleton and
Talbot~\cite{hugues} provided the first PDE-based algorithm to find
Strang's continuous max-flows and therefore optimal min-cuts.  This
same algorithm was also derived by Unger~{\it{et al.}}  from the
standpoint of minimizing continuous total
variation~\cite{unger2008tvseg}.  Adapted to image segmentation, this
algorithm is shown to be equivalent~\cite{ungerReport} to the
Appleton-Talbot algorithm and has been demonstrated to be fast when
implemented on massively parallel architectures. Unfortunately, this
algorithm has no stopping criteria and has not been proved to
converge. Works by Pock~{\it et al}~\cite{pock2008ECCV}, Zach~{\it et
  al}~\cite{zach2008} and Chambolle~{\it et al}~\cite{Chambolle2008}
present different algorithms for optimizing comparable energies for
solving multilabel problems, but again, those algorithms are not
proved to converge.  Some other works have been presenting provably
converging algorithms, but with relatively slow convergence speed. For
example, Pock and Chambolle introduce in ~\cite{ChambollePock2010} a
general saddle point algorithm that may be used in various
applications, but needs half an hour to segment a $350 \times 450$
image on a CPU and still more than a dozen seconds on a GPU
{\change(Details are given in the experiments section)}.

\subsection*{Links and differences with total variation minimization}
G. Strang \cite{strang2008} has shown the continuous max-flow problem
for the $l_2$ norm to be the dual of the total variation (TV) minimization
problem under some assumptions. The problem of total
variations, introduced successively in computer vision by
  Shulman and Herv\'e \cite{shulman1989opticalTV} and later Rudin,
  Osher and Fatemi~\cite{rudin1992total} as a regularizing criterion
  for optical flow computation and then image denoising, has been
shown to be quite efficient for smoothing images without affecting
contours. Moreover, a major advantage of TV is that it is a convex
problem.  Thus, a straightforward algorithm such as gradient descent
may be applied to find a minimum solution. However, there is a need
for faster methods, and significant progress have been achieved
recently with primal-dual approaches \cite{chambolle2004}, Nesterov's
algorithm~\cite{Nesterov2007gradient} and
Split-Bregman/Douglas-Rachford methods
\cite{GoldsteinOsher,NonLocalBregman}. Most methods
  minimizing TV focus on image filtering as an application, and even
  if those methods are remarkably fast in denoising applications,
  segmentation problems require a lot more iterations for those
  algorithms to converge. As stated previously, continuous max-flow
is dual with total variation {\it in the continuous setting under restrictive regularity conditions ~\cite{nozawa1990max}}. In fact,
  Nozawa~\cite{nozawa1994duality} showed that there is a duality gap
  between weighted TV and weighted max-flow under some conditions in
  the continuous domain. Thus, it is important to note that weighted
  TV problems are not equivalent to the weighted maximum flow problem
  studied in this paper. Furthermore, many works assume that the
continuous-domain duality holds algorithmically, but we show later in
this paper that at least in the combinatorial case this is not true.

The previous works for solving the max-flow problem illustrate two
  difficulties with continuous-based formulation, that are (1) the
discretization step, which is necessary for deriving algorithms, but
may break continuous-domain properties; and (2) the convergence, both
of the underlying continuous formulation, and the associated
algorithm, which itself depend on the discretization. It is very well
known that even moderately complex systems of PDEs may not converge,
and even existence of solutions is sometimes not
obvious~\cite{NavierStokes}. Even when existence and convergence
proofs both exist, sometimes algorithmic convergence may be slow in
practice.
For these reasons, combinatorial approaches to the maximal flow and related problems are beginning to emerge.

%

\subsection*{Combinatorial approaches}
Discrete calculus~\cite{hirani,grady2010discrete} has been used in
recent years to produce a combinatorial reformulation of continuous
problems onto a graph in such a manner that the solution behaves
analogously to the continuous formulation (e.g.,
\cite{elmoataz2008nonlocal,grady2009piecewise}).

In particular, Lippert presented in~\cite{Lippert2006} a combinatorial
formulation of an isotropic continuous max-flow problem on a planar
lattice, making it possible to obtain a provably optimal
solution. However, in Lippert's work, parameterization of the capacity
constraint is tightly coupled to the 4-connected squared grid, and the
generalization to higher dimensions seems non-trivial. Furthermore it
involves the multiplication of the number of capacity constraints by
the degree of each node, thus increasing the dimension of the
problem. This formulation did not lead to a fast
  algorithm. The author compared several general solvers, quoting
  an hour as their best time for computing a solution on a $300 \times 300$ lattice.

\omitme{General convex solvers were used to compute the solution of this formulation, but they run very slowly.}

\subsection*{Motivation and contributions}
In this paper, we pursue a combinatorial reformulation of the
continuous max-flow problem initially formulated by Strang, which we
term \emph{combinatorial continuous maximum flow} (CCMF). Viewing our
contribution differently, we adopt a discretization of continuous
max-flow as the primary problem of interest, and then we apply fast
combinatorial optimization techniques to solve the discretized
version.

Our reformulation of the continuous max-flow problem produces a
(divergence-free) flow problem defined on an arbitrary
graph. Strikingly, CCMF are not equivalent to the discretization of
continuous max-flows produced by Appleton and Talbot or that of
Lippert (if for no other reason than the fact that CCMF is defined on
an arbitrary graph). Moreover, \emph{CCMF is not equivalent to the
  classical max-flow on a graph}. In particular, we will see that the
difference lies in the fact that capacity constraints in classical
max-flow restrict flows along graph edges while the CCMF capacity
constraints restrict the total flow passing through the nodes.

 The CCMF problem is convex. We deduce an expression of the dual
problem, which allows us to employ a primal-dual interior point method
to solve it, such as interior point methods have been used
  for graph cuts~\cite{BhusnurmathPAMI08} and second order cone
  problems in general~\cite{Goldfarb2005cone}. The CCMF problem has
several desirable properties, including:
\begin{enumerate}
\item the solution to CCMF on a 4-connected lattice avoids the
  metrication errors. Therefore, the gridding error problems may be
  solved without the additional memory required to process classical
  max-flow on a highly-connected graph;
\item in contrast to continuous max-flow algorithms of Appleton-Talbot (AT-CMF)
  and equivalent, the solution to the CCMF problem can be computed
  with guaranteed convergence in practical time;
\item the CCMF problem is formulated on an arbitrary graph, which
  enables the algorithm to incorporate nonlocal edges
  \cite{nlmeans,elmoataz2008nonlocal,grady2004:faster} or to apply it
  to arbitrary clustering problems defined on a graph; and
\item the algorithm for solving the CCMF problem is fast, easy to
  implement, compatible with multi-resolution grids and is
  straightforward to parallelize.
\end{enumerate}

Our computation of the CCMF dual further reveals that duality between
total variation minimization and maximum flow does not hold for CCMF
and combinatorial total variation (CTV). The comparison between those
two combinatorial problems is motivated by several interests:
\begin{enumerate}
\item for clarification: in the continuous domain, the duality between TV and MF
  holds under some regularity constraints. In the discrete anisotropic domain
  (i.e. Graph Cuts), this duality always holds. However, In the isotropic weighted
  discrete case, i.e. in the CTV, AT-CMF or CCMF cases, we are not aware of
  any discretization such that the duality holds.
  It is not obvious to realize that the duality
  does not hold in the discrete setting even though it may in the continuous
  case. We present clearly the differences between the two problems,
  theoretically, and in term of results;
\item to expose links between CCMF and CTV: We prove that the weak
  duality holds between the two problems; and
\item for efficiency: in image segmentation, the fastest known
  algorithms to optimize CTV are significantly slower than CCMF. Therefore,
  there is reason to believe that CCMF may be used to efficiently
  optimize energies for which TV has been previously shown to be
  useful (e.g., image denoising).
\end{enumerate} 

In the next section, we review the formulation of continuous max-flow,
derive the CCMF formulation and its dual and then provide details of
the fast and provably convergent algorithm used to solve the new CCMF problem.

\section{Method} 

Our combinatorial reformulation of the continuous maximum flow problem
leads to a formulation on a graph which is \emph{different} from the
classical max-flow algorithm.  Before proceeding to our formulation,
we review the continuous max-flow and the previous usage of this
formulation in computer vision. 

\subsection{The continuous max-flow (CMF) problem}

First introduced by Iri \cite{iri}, Strang presents in
\cite{strang1983} an expression of the continuous maximum flow problem
%

\begin{align}
\begin{gathered}
\max {F_{st}},\;\;\;\;\;\;\\
\text{s.t.}\;\;\;  \nabla \cdot \overrightarrow{F} = 0,\\
||\overrightarrow{F}|| \leq g.
\label{eq:cont_MF}
 \end{gathered}
 \end{align}

 Here we denote by $F_{st}$ the total amount of flow going from the
 source location $s$ to the sink location $t$, $\overrightarrow{F}$ is
 the flow, and $g$ is a scalar field representing the local metric
 distortion. The solution to this problem is the exact solution of the
 geodesic active contour (GAC) (or surface)
 formulation~\cite{caselles1997geodesic}.  In order to solve the
 problem \eqref{eq:cont_MF}, the Appleton-Talbot algorithm (AT-CMF)
 \cite{hugues} solves the following partial differential equation
 system:

\begin{eqnarray}
\nonumber \frac{\partial P}{\partial \tau} = -\nabla \cdot
\overrightarrow{F},\\
\label{eq:cmf}
\frac{\partial \overrightarrow{F}}{\partial \tau} = -\nabla P,\\
\nonumber \mbox{s. t. } {||\overrightarrow{F}||} \leq g. 
\label{AT}
\end{eqnarray}
 
Here $P$ is a potential field similar to the excess value in the Push-Relabel
maximum flow algorithm~\cite{tarjan}. AT-CMF is effectively a simple continuous
computational fluid dynamics (CFD) simulation with non-linear constraints. It
uses a forward finite-difference discretization of the above PDE system subject
to a Courant-Friedrich-Levy (CFL) condition, also seen in early level-sets
methods. At convergence, the potential function $P$ approximates an indicator
function, with $0$ values for the background labels, and $1$ for the foreground, 
 and the flow $\overrightarrow{F}$ has zero-divergence almost everywhere.
However, there is no guarantee of convergence for this algorithm and,
in practice, many thousands of iterations can be necessary to achieve
a binary $P$, which can be very slow.

Although Appleton-Talbot is a continuous approach, applying this
algorithm to image processing involves a discretization step.
The capacity constraint $||\overrightarrow{F}|| \leq g$ is interpreted
as $\max{{F_{x(i)}}^2} + \max{{F_{y(i)}}^2} \leq g_i^2$, with
$F_{x(i)}$ the outgoing flow of edges linked to node $i$ along the $x$
axis, and $F_{y(i)}$ the outgoing flow linked to node $i$ along the
$y$ axis. We notice that the weights are associated with point
locations (pixels), which will correspond later to a node-weighted
graph.

In the next section, we use the operators of discrete calculus to
reformulate the continuous max-flow problem on a graph and show the
surprising result that the graph formulation of the continuous
max-flow leads to a different problem from the classical max-flow
problem on a graph.

\subsection{A discrete calculus formulation}

Before establishing the discrete calculus formulation of the
continuous max-flow problem, we specify our notation. A graph consists
of a pair $G = (V,E)$ with vertices $v \in V$ and edges $e \in E
\subseteq V \times V$. A transport graph $G(V, E)$ comprises two
additional nodes, named a source $s$ and a sink $t$, and additional
edges linking some nodes to the sink and some to the source. Including
the source and the sink nodes, the cardinalities of $G$ are given by
$n = |V|$ and $m = |E|$. An edge, $e$, spanning two vertices, $v_i$
and $v_j$, is denoted by $e_{ij}$. In this paper we deal with weighted
graphs that include weights on both the edges and nodes. An edge
weight is a value assigned to each edge which may be viewed as a
capacity in the context of a maximum flow problem.  The weight of an
edge, $e_{ij}$, is denoted by $\tilde{g}_{ij}$. In this work, we
assume that $\tilde{g}_{ij} \in \R$ and $\tilde{g}_{ij} > 0$, 
  and use $\tilde{g}$ to denote the vector of $\R^m$ containing the
  $\tilde{g}_{ij}$ for every edge $e_{ij}$ of $G$. In
addition to edge weights, we may also assign weights to nodes. The
weight of node $v_i$ is denoted by $g_i$. In this work, we also assume
that $g_{i} \in \R$ and $g_{i} > 0$.  We use $g$ to denote the vector of $\R^n$ containing the
  $g_{ij}$ for every edge $e_{ij}$ of $G$. We define a flow through edge
$e_{ij}$ as $F_{ij}$ where $F_{ij} \in \R$ and use the vector $ F \in \R^m$ to
denote the flows through all edges in the graph .  Each edge is assumed
to be oriented, such that a positive flow on edge $e_{ij}$ indicates
the direction of flow from $v_i$ to $v_j$, while a negative flow
indicates the direction of flow from $v_j$ to $v_i$.

The incidence matrix is a key operator for defining a combinatorial
formulation of the continuous max-flow problem.  Specifically, the
incidence matrix $ A \in \R^{m \times n}$ is known to define the discrete calculus analogue
of the gradient, while $A^T$ is known to define the discrete calculus
analogue of the divergence (see \cite{grady2010discrete} and the
references therein). The incidence matrix maps functions on nodes (a
scalar field) to functions on edges (a vector field) and may be
defined as
\begin{equation}
A_{e_{ij} v_k}= \begin{cases}
                +1& \text{if $i=k$},\\
                -1& \text{if $j=k$},\\
                0& \text{otherwise},
        \end{cases} 
\label{eq:incidence}
\end{equation}
for every vertex $v_k$ and edge $e_{ij}$. In our formulation of
continuous max-flow, we use the expression $|A|$ to denote the matrix
formed by taking the absolute value of each entry individually.

Given these definitions, we now produce a discrete (combinatorial)
version of the continuous max-flow of \eqref{eq:cont_MF} on a 
transport graph. As in
\cite{grady2010discrete,elmoataz2008nonlocal,grady2009piecewise}, the
continuous vector field indicating flows may be represented by a
vector on the edge set, $F$.  Additionally, the combinatorial
divergence operator allows us to write the first constraint in
\eqref{eq:cont_MF} as $A^T F = 0$.  The second constraint in
\eqref{eq:cont_MF} involves the comparison of the point-wise
norm of a vector field with a scalar field.  Therefore, we can follow
\cite{grady2010discrete,elmoataz2008nonlocal,grady2009piecewise} to
define the point-wise $\ell_2$ norm of the flow field $F$ as
$\sqrt{|A^T| F^2}$. In our notations here, as in the rest of
  the paper, $F^2 = F \cdot F$ denotes a element-wise product,
  ``$\cdot$'' denoting the Hadamard (element-wise) product between the
  vectors, and the square root of a vector is also here and in the rest
  of the paper the vector composed of the square roots of every elements.

Putting these pieces together, we obtain
\begin{align}
\begin{gathered}
\max {F_{st}},\;\;\;\;\;\;\\
\text{s.t.}\;\;\;  A^T F = 0,\\
 |A^T| F^2 \leq g^2.
\label{f_new}
 \end{gathered}
 \end{align}
Compare this formulation to the classical max-flow problem on a graph,
given in our notation as
\begin{align}
\begin{gathered}
\max {F_{st}},\;\;\;\;\;\;\\
\text{s.t.}\;\;\;  A^T F = 0,\\
 |F| \leq \tilde{g}.
\label{f_old}
 \end{gathered}
 \end{align}

By comparing these formulations of the traditional max-flow with our
combinatorial formulation of the continuous max-flow, it is apparent
that the key difference between the classical formulation and our
combinatorial continuous max-flow is in the capacity constraints. In
both formulations, the flow is defined through edges, but in the
classical case the capacity constraint restricts flow through an edge
while the CCMF formulation restricts the amount of flow passing
through a node by taking in account the neighboring flow
  values. This contrast-weighting applied to nodes (pixels) has been
a feature of several algorithms with a continuous formulation,
including geodesic active contours \cite{caselles1997geodesic}, CMF
\cite{hugues}, TVSeg \cite{unger2008tvseg}, and
  \cite{ChambollePock2010}. On the other hand, the problem
defined in \eqref{f_new} is defined \emph{on an arbitrary graph} in
which contrast weights (capacities) are almost always defined on
\emph{edges}. The node-weighted capacities fit Strang's formulation of
a scalar field of constraints in the continuous max-flow formulation
and therefore our graph formulation of Strang's formulation carries
over these same capacities. The most important point here is
  not having expressed the capacity on the nodes of the graph but
  rather how the flow is enforced to be bounded by the metric in each
  node. Bounding the norm of neighboring flow values in each node
  simulates a closer behavior of the continuous setting than if no
  contribution of neighboring flow was made as in the standard
  max-flow problem. Figure \ref{diff_discretization} illustrates the
relationship of the edge capacity constraints in the classical
max-flow problem to the node capacity constraints in the CCMF
formulation.  The null-divergence constraint is also essential
  in our formulation since it permits us to obtain constant partitions
  almost everywhere in the dual solution introduced in the next
  section, in particular binary ones.
\begin{figure}[ht]
\begin{center}
\begin{frame}{\includegraphics[width=0.85 \linewidth]{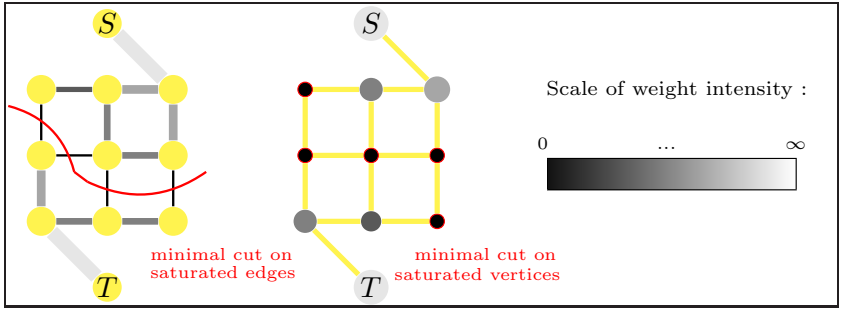}}\end{frame}
\end{center}
\caption{The difference between classical max-flow on a graph with the
  combinatorial continuous max-flow (CCMF) on a graph is that classical
  max-flow uses edge-weighted capacities while CCMF uses node-weighted
  capacities. This difference is manifest in the different solutions
  obtained for both algorithms and the algorithms required to find a
  solution. Specifically, the solution to the CCMF problem on a
  lattice does not exhibit metrication bias.}
\label{diff_discretization}
\end{figure}

In the context of image segmentation, the vector $g$ varies
inversely to the image gradient. We propose to use, as
in~\cite{unger2008tvseg},
\begin{equation}
g =  \exp(-\beta ||\nabla I||_2),
\label{eq_exp_metric}
\end{equation}
where $I$ indicates the image intensity. For simplicity, this
weighting function is defined for greyscale images, but $g$
could be used to penalize changes in other relevant image quantities,
such as color or texture. Before addressing the solution of the CCMF
problem, we consider the dual of the CCMF problem and, in particular,
the sense in which it represents a minimum cut. Since the cut weights
are present on the nodes rather than the edges, we must expect the
minimum cut formulation to be different from the classical minimum cut
on a graph.

\subsection{The CCMF dual problem}

Classically, the max-flow problem is dual to the min-cut problem,
allowing a natural geometric interpretation of the objective function.
In order to provide the same interpretation, we now consider the dual
problem to the CCMF and show that we optimize a node-weighted minimum
cut. In the following proposition, we denote the element-wise quotient
  of two vectors $u=[u_1, \ldots, u_k]$ and $v=[v_1, \ldots, v_k]$ by $
  u\cdot/v = [u_1/v_1, \ldots, u_k/v_k]$. We also denote $\1^n$ a unit
  vector [1, \ldots, 1] of size $n$, and recall that the square
  exponent $v^2$ of a vector $v$ represents the resulting vector of
  the element-wise mulitiplication $v\cdot v$.

\begin{proposition}
In a transport graph $G$ with $m$ edges, $n$ nodes, we
define a vector $c$ of $\R^m$, composed of zeros except for the
element corresponding to the source/sink edge which is $1$.  Let
$\lambda$ and $\nu$ be two vectors of $\R^n$.  The CCMF problem
\begin{eqnarray}
\label{problem}  
      \nonumber   \max_{ {F \in \R^m}} & c^T F,\\
       \mbox{s. t.} & A^T F = 0,\\
      \nonumber ~ & |A^T| F^2 \leq g^2.
\end{eqnarray}
has for dual 
\begin{eqnarray}
\label{unsimplified_dual}
 \min_{{\lambda\in\R^n,~\nu\in\R^n}} & ~\lambda^T g^2 + \frac{1}{4} {\bigg( \1^n \cdot/ (|A| \lambda)\bigg)^T \bigg((c+A\nu)^2\bigg)},\\
\nonumber \mbox{s. t.} &\lambda \geq 0,
\end{eqnarray}
equivalently written in \eqref{weighted_dual}, and the optimal solution $(F^*, \lambda^*, \nu^*)$ verifies 
\begin{equation}
\label{dual_lambda_nu}
\max_F{c^TF} =  c^T F^* = 2{\lambda^*}^T g^2,
\end{equation}
and the $n$ following equalities
\begin{equation}
 \lambda^* \cdot |A^T|\bigg((c+A\nu^*) \cdot/ (|A|\lambda^* )\bigg)^2 = 4\lambda^* \cdot g^2.
\end{equation}

\end{proposition}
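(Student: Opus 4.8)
The plan is to derive the dual \eqref{unsimplified_dual} by Lagrangian duality and then to read the relations among $F^*$, $\lambda^*$, $\nu^*$ off the KKT conditions. First I would form the Lagrangian of \eqref{problem}, attaching a free multiplier $\nu\in\R^n$ to the equality constraints $A^T F = 0$ and a nonnegative multiplier $\lambda\in\R^n$ to the node capacity constraints $|A^T| F^2 \le g^2$:
\[
L(F,\lambda,\nu) = c^T F + \nu^T A^T F + \lambda^T\big(g^2 - |A^T| F^2\big).
\]
The key algebraic observation is that $\nu^T A^T F = (A\nu)^T F$ and, from the definitions of $|A|$ and $|A^T|$, $\lambda^T |A^T| F^2 = (|A|\lambda)^T F^2 = \sum_e (|A|\lambda)_e F_e^2$; hence $L = \lambda^T g^2 + \sum_e\big[(c+A\nu)_e F_e - (|A|\lambda)_e F_e^2\big]$, which is separable over the edges.

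Next I would maximize $L$ over $F$ edge by edge. Each summand is a concave quadratic in $F_e$ when $(|A|\lambda)_e > 0$, with unconstrained maximizer $F_e^* = (c+A\nu)_e/\big(2(|A|\lambda)_e\big)$ and maximal value $(c+A\nu)_e^2/\big(4(|A|\lambda)_e\big)$; if $(|A|\lambda)_e = 0$ the supremum is $+\infty$ unless $(c+A\nu)_e = 0$ as well. Restricting (as is implicit) to the set where the dual function $q(\lambda,\nu)=\max_F L$ is finite and using the convention $0/0 = 0$, one gets $q(\lambda,\nu) = \lambda^T g^2 + \tfrac14\big(\1 \cdot/ (|A|\lambda)\big)^T (c+A\nu)^2$, and minimizing over $\lambda\ge 0$, $\nu\in\R^n$ yields exactly \eqref{unsimplified_dual}.

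To obtain the optimality relations I would invoke strong duality, which is legitimate here: \eqref{problem} is convex with affine equality constraints and admits the strictly feasible point $F=0$ (it satisfies $A^T 0 = 0$ and $0 < g^2$ strictly, since $g>0$), so Slater's condition holds; moreover each edge is incident to two positively weighted nodes, so the capacity constraints bound $F$ componentwise and the primal optimum is attained. Therefore $\max c^T F = c^T F^*$, the KKT conditions hold, and in particular $F^*$ is the edgewise maximizer above, i.e. $(c+A\nu^*)\cdot/(|A|\lambda^*) = 2F^*$. Using primal feasibility $A^T F^* = 0$ gives $(c+A\nu^*)^T F^* = c^T F^*$; substituting the maximizer gives $(c+A\nu^*)^T F^* = \sum_e (c+A\nu^*)_e^2/\big(2(|A|\lambda^*)_e\big) = 2(|A|\lambda^*)^T F^{*2} = 2\,{\lambda^*}^T|A^T|F^{*2}$, and componentwise complementary slackness $\lambda^*\cdot(|A^T|F^{*2}-g^2)=0$ turns this into $2\,{\lambda^*}^T g^2$, establishing \eqref{dual_lambda_nu}. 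For the $n$ pointwise equalities I would substitute $(c+A\nu^*)\cdot/(|A|\lambda^*) = 2F^*$ into the left-hand side, reducing it to $4\,\lambda^*\cdot|A^T|F^{*2}$, and then apply the componentwise complementary slackness $\lambda_i^*(|A^T|F^{*2})_i = \lambda_i^* g_i^2$ to reach $4\,\lambda^*\cdot g^2$.

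The main delicate point is the degeneracy $(|A|\lambda)_e = 0$: the dual function is $+\infty$ there unless the corresponding numerator vanishes, so \eqref{unsimplified_dual} must be read on the effective domain $|A|\lambda > 0$ (at least on the support of $c+A\nu$), with the $0/0$ convention. Checking that an optimal $(\lambda^*,\nu^*)$ lies in the interior of this domain, so that the divisions in $F^* = \tfrac12(c+A\nu^*)\cdot/(|A|\lambda^*)$ and in the pointwise equalities are well defined, is where care is needed, and it is precisely why the strictly-feasible / bounded-primal argument is required before the KKT conditions can be applied.
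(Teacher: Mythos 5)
Your proof is correct and follows essentially the same route as the paper: form the Lagrangian, eliminate $F$ by the stationarity condition $F_e=(c+A\nu)_e/(2(|A|\lambda)_e)$ to obtain the dual \eqref{unsimplified_dual}, and derive \eqref{dual_lambda_nu} and the $n$ pointwise equalities from complementary slackness. Your treatment is in fact somewhat more careful than the paper's, since you verify Slater's condition and attainment before invoking KKT and you flag the degeneracy $(|A|\lambda)_e=0$, which the paper's proof passes over silently.
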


\begin{proof}
The Lagrangian of \eqref{problem} is 
\begin{equation*}
L(F,\lambda,\nu) =(c^T + \nu^T A^T) F + \lambda^T |A^T| F^2 - \lambda^T g^2,
\end{equation*}
with  $\lambda \in \R^n$ and $\nu \in \R^n$ two Lagrange multipliers.
We have to find $F$ in the dual function is such as $\nabla_F L(F,\lambda,\nu) = 0 $.
\begin{equation}
\nabla_F L(F,\lambda,\nu) = c+A\nu + 2|A|\lambda \cdot F = 0
 ~\Leftrightarrow~ F = (c+A\nu)\cdot/(-2|A|\lambda). 
\label{r_dual}
\end{equation}
Substituting $F$ in the Lagrangian function \omitme{\eqref{dual_function}}, we obtain 
  \begin{equation*}
L(\lambda,\nu) = (c^T+\nu^T A^T) \bigg( (c+A\nu)\cdot/(-2|A|\lambda)\bigg)  + \lambda^T|A^T| {\bigg( (c+A\nu)\cdot/(-2|A|\lambda)\bigg)}^2 - \lambda^T g^2.
  \end{equation*} 
The Lagrangian may be simplified as:
   \begin{equation*}
  L(\lambda, \nu)= - {\bigg(\1^n \cdot/ (4|A|\lambda)\bigg)^T\bigg((c+A\nu)^2\bigg)} - \lambda^T g^2.
  \end{equation*} 
The dual of \eqref{problem} is also
\begin{eqnarray}
\label{unsimplified_dual2}
\min_{\lambda, \nu} & ~\lambda^T g^2 + {\frac{1}{4} {\bigg( \1^n \cdot/ (|A| \lambda)\bigg)^T \bigg((c+A\nu)^2\bigg)}}, \\
\nonumber \mbox{s. t. } &\lambda \geq 0.
\end{eqnarray}
\omitme{
At convergence, we notice that the solution $(\lambda, \nu)$ of
\eqref{unsimplified_dual} satisfies
 \begin{equation*}
\max_F ~ c^T F = \min_{\lambda, \nu} ~ 2\lambda^T g^2,
\end{equation*}
\begin{equation*} 
  \lambda \cdot |A^T|\left( (c+A\nu) \cdot/ (|A|\lambda)  \right)^2 = 4\lambda \cdot g^2. 
\end{equation*}
In effect, the KKT optimality conditions give 
\begin{equation}
c+A\nu+2|A|\lambda \cdot F = 0,
\end{equation}
}
Furthermore, the KKT optimality conditions give 
\begin{equation}
\label{r_cent}
\lambda^* \cdot\left(|A^T|{F^*}^2-g^2\right)=0.
\end{equation}
 Using the expression of $F^*$ from equation \eqref{r_dual} \omitme{gives $ F = \frac{c + A \nu}{ -2|A|\lambda } $}, and substituting it in \eqref{r_cent}, we obtain
\begin{equation*}
 \lambda^* \cdot
 |A^T|{\bigg((c+A\nu^*)\cdot/(|A|\lambda^*)\bigg)^2} = 4\lambda^* \cdot g^2. 
\end{equation*}
Taking the sum of right hand side and left hand side, 
\begin{equation}
\label{eq8}
 (c+A\nu^*)^2 \cdot/ (|A|\lambda^*)   = 4{\lambda^*}^T g^2. 
\end{equation}
Finally, if we substitute \eqref{eq8} in the dual expression
\eqref{unsimplified_dual}, we have
\begin{equation*}
\max_F ~ c^T F =  c^T F^* = 2{\lambda^*}^T g^2.
\end{equation*}
\end{proof}

The expression of the CCMF dual may be written in summation form as
\begin{align}
\begin{small}
\begin{gathered}
 \min_{\lambda, \nu} \sum_{v_i \in V}{\overbrace{\lambda_i
     g_i^2}^{\mbox{weighted cut}}} + \overbrace{\frac{1}{4}
   \sum_{e_{ij}\in
     E\setminus\{s,t\}}{\frac{(\nu_i-\nu_j)^2}{\lambda_i+\lambda_j}}}^{\mbox{smoothness
 term}} + \overbrace{\frac{1}{4}
   \frac{(\nu_s-\nu_t-1)^2}{\lambda_s+\lambda_t}}^{\mbox{source/sink enforcement}} \\
 \text{s. t. } \lambda_i \geq 0 ~~~ \forall i \in V.
\label{weighted_dual}
 \end{gathered}
\end{small}
 \end{align}

\noindent{\bf Interpretation: } The optimal value {$\lambda^*$} is a weighted indicator of
the saturated vertices (a vertex $v_i$ is saturated if $|A^T|_i F^2 =
g_i^2$ where $|A^T|_i$ indicates the $i$th row of $|A^T|$):
\begin{equation} 
\lambda^*(v_i) \left\{
    \begin{array}{ll}
        >0 ~~\mbox{  if } {|A^T|}_i F^2 = g(v_i)^2, \\
        =0 ~~\mbox{  otherwise.}\\
    \end{array}
\right.
\label{interpretation}
\end{equation}
The variables $\nu_s$ and $\nu_t$ are not constrained to be set to 0
and 1, only their difference is constrained to be equal to one. Thus their value range
between a constant and a constant plus one. Let us call this constant
$\delta$, and without loss of generality one can consider
 that $\delta=0$. The term $\nu$ is at optimality a weighted
indicator of the source/sink/saturated~vertices partition:
\begin{equation*} 
\nu^*(v_i) = \left\{
    \begin{array}{ll}
        0 + \delta ~~\mbox{  if } v_i \in S, \\
        \mbox{a number between }(0 + \delta) \mbox{ and } (1 + \delta) ~~\mbox{  if } {|A^T|}_i F^2 = g(v_i)^2,\\
        1 + \delta ~~\mbox{  if } v_i \in T. \\
    \end{array}
\right.
\end{equation*}

The expression \eqref{dual_lambda_nu} of the CCMF dual shows that the
problem is equivalent to finding a minimum weighted cut defined on the
nodes.

Finally, the ``weighted cut'' is recovered in
\eqref{weighted_dual}, and the ``smoothness term'' is compatible with large
variations of $\nu$ at the boundary of objects because of a large
denominator ($\lambda$) in the contour area.
An illustration of optimal $\lambda$ and $\nu$ on an image is shown on Fig.~\ref{peppers}.

\begin{figure}[tb]
\centering{
    \subfigure[]
    {\label{pepper-a}\includegraphics[width=0.24\textwidth]{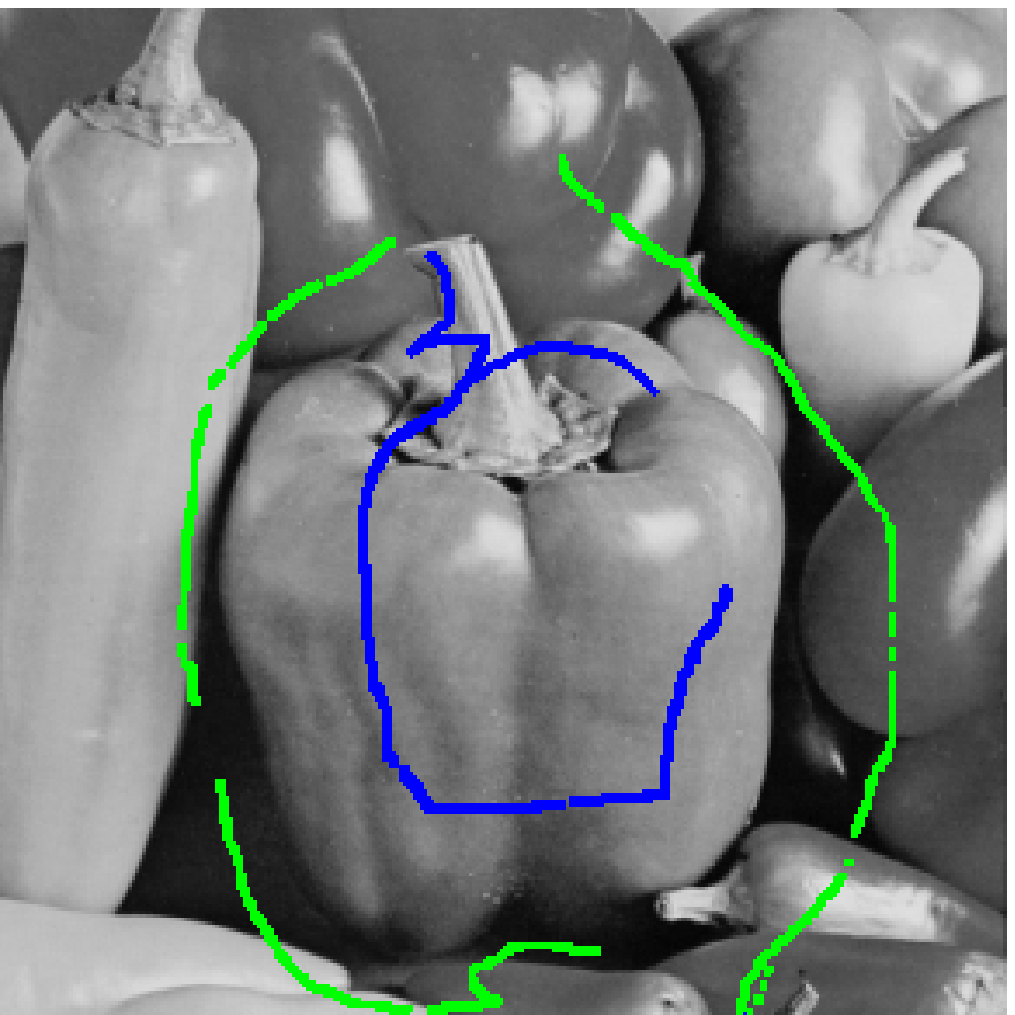}}~~
   \subfigure[$\lambda$]
   {\label{pepper-b}\begin{frame}{\includegraphics[width=0.24\textwidth]{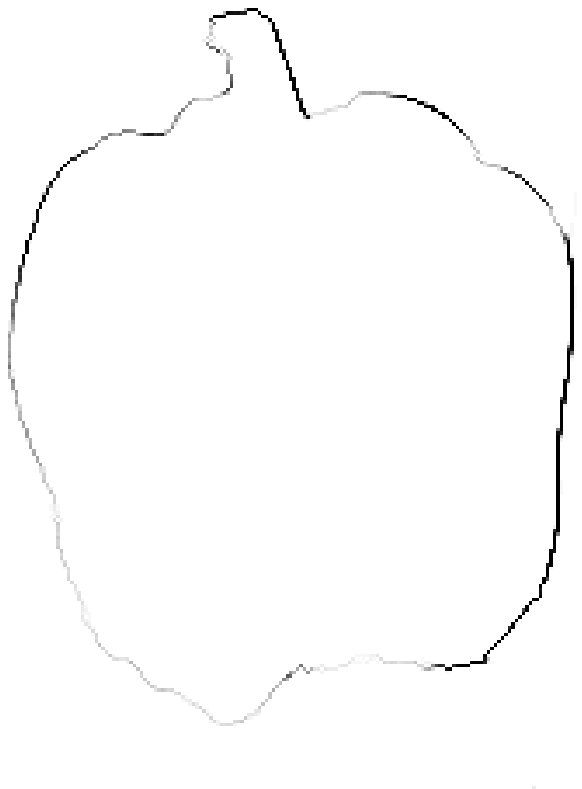}}\end{frame}}~~
    \subfigure[$\nu$]
    {\label{pepper-c}\begin{frame}{\includegraphics[width=0.24\textwidth]{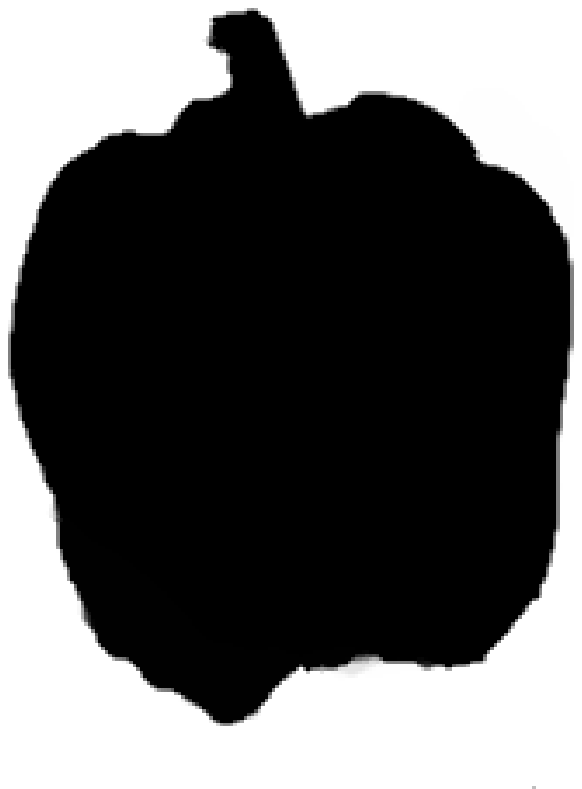}}\end{frame}}~~
  \subfigure[Threshold $\nu$ at $.5$]
            {\label{pepper-d}\includegraphics[width=0.24\textwidth]{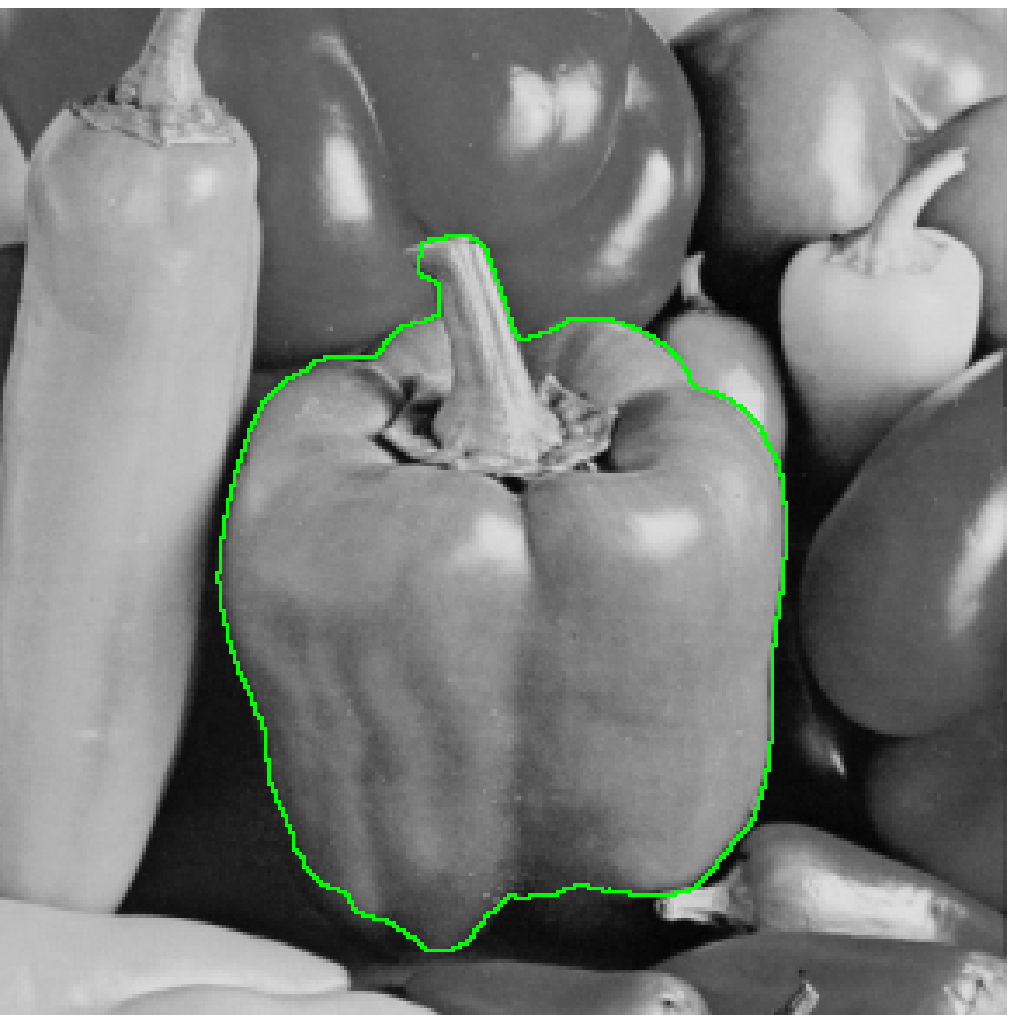}}
}
  \caption{The dual problem to CCMF is a node-weighted minimum cut in
    which the variable $\lambda$ is a weighted indicator vector
    labeling boundary nodes and the variable $\nu$ is a nearly binary
    vector indicating the source/sink regions. As a result, the
      contours of $\nu$ are slightly blurry. This is due to the
      equilibrium effect between the two dual variables. In practice,
      as $\lambda$ is nonzero only in presence of a contour, $\nu$ is
      binary almost everywhere, except on a very thin line.}
\label{peppers}
\end{figure}

\subsection{Solving the Combinatorial Continuous Max-Flow problem}
\label{secPDIP}

When considering how to optimize the CCMF problem \eqref{f_new}, the first key
observation is that the constraints bound a convex feasible region.

 The real valued functions $f_i : \R^m \rightarrow \R$ defined for every node $i=1, \ldots, n$ as $f_i(F)={|A^T|_i} F^2 -
g_i^2$ are non-negative quadratic, so convex functions. We define the
vector $f(F)$ of $\R^n$ as $f(F) = [f_1(F), \ldots, f_n(F)]^T$.

 Since the constraints are convex, the
CCMF problem may be solved with a fast primal dual interior point method (see
\cite{convex}), which we now review in the specific context of CCMF.

\vspace{1ex}
\restylealgo{algoruled}
 \begin{algorithm}[H]
\dontprintsemicolon \SetVline
\KwData{ {$F \in \R^m = 0, \lambda \in (\R^*_+)^n, \nu \in \R^n$}, $\mu > 1, \epsilon > 0$.}
\KwResult{$F$ solution to the CCMF problem \eqref{problem} such as $F_{st}$ is maximized under the divergence free
  and capacity constraints, and $\nu, \lambda$ solution to the CCMF dual
  problem \eqref{dual_lambda_nu}.}
\SetVline
\Repeat {$||r_p||_2 \leq \epsilon$, $||r_d||_2 \leq \epsilon$, and $\widehat{\eta} \leq \epsilon$.}
{ 
1.~Compute the surrogate duality gap $\widehat{\eta} =
  -f(F)^T\lambda $ and set {$t = \mu n / \widehat{\eta} $}.\\
2.~Compute the primal-dual search direction $\Delta_y$ such as $M \Delta_y = r$.\\
3.~Determine a step length $s > 0$ and set $y=y+s\Delta_y$. ~~{\small($y =[F, \lambda, \nu]^T$)}\\
}
\caption{Primal dual interior point algorithm}
\label{PDIPalg}
\end{algorithm}
\vspace{1ex}

The primal dual interior point (PDIP) algorithm iteratively computes
the primal $F$ and dual $\lambda$, $\nu$ variables so that the
Karush-Kuhn-Tucker (KKT) optimality conditions are satisfied. This
algorithm solves the CCMF problem \eqref{problem} by applying Newton's
method to a sequence of a slightly modified version of the KKT
conditions. The steps of the PDIP procedure are given in Algorithm
\ref{PDIPalg}. Specifically for CCMF, the system  $M \Delta_y = r$
system may be written
{\footnotesize
\begin{equation}
\left[\begin{matrix}
\omitme{\nabla^2 f_0(F) +} \sum_{i=1}^n{\lambda_i \nabla^2 f_i(F)} & Df(F)^T & A \\
-\diag(\lambda)Df(F) & -\diag(f(F))&0 \\
A^T&0&0
\end{matrix} \right] \left[ \begin{matrix}
\Delta_F\\
\Delta_{\lambda}\\
\Delta_{\nu}
\end{matrix} \right] = - \left[ \begin{matrix}
r_d =   c + Df(F)^T \lambda + A \nu \\
r_c = -\diag(\lambda)f(F)-(1/t)\\
r_p = A^TF\\
\end{matrix}\right] 
\label{general_sys}
\end{equation}
}

\noindent
with $r_{d}$, $r_{c}$, and $r_{p}$ representing the dual, central, and
primal residuals. Additionally, the derivatives are given by 
\begin{equation*}
\nabla f_i(F) = 2{|A^T|}_i\cdot F,~~
Df(F)=   \left[ \begin{matrix}
\nabla f_1(F)^T \\
\vdots  \\
\nabla f_n(F)^T  
\end{matrix} \right],
~~
 \nabla^2 f_i(F) = \diag 
 \left[ \begin{matrix}
2 {|A^T|}_{i1}\\
\vdots\\
2 {|A^T|}_{im}\\
\end{matrix} \right],
\end{equation*}
with ``$\cdot$'' denoting the Hadamard (element-wise) product between the
vectors. 

Consequently, the primary computational burden in the CCMF algorithm
is the linear system resolution required by \eqref{general_sys}.
In the result section we present execution times obtained in our
experiments using a conventional CPU implementation.

Observe that, although this linear system is large, it is
  very sparse and is straightforward to parallelize on a GPU
  \cite{bolz2003sparse,kruger2003GPU,grady2005GPU}, for instance using
  an iterative GPU solver.  If it does not necessarily imply a faster
  solution, the asymptotic complexity of modern iterative and modern
  direct solvers is about the same and, in our experience, there has
  been a strong improvement in the performance of a linear system
  solve when going from a direct solver CPU solution to a conjugate
  gradient solver GPU solution, especially as the systems get larger.

\omitme{Further computational efficiency could be obtained by
using a GPU, which is known to provide a straightforward and highly
efficient implementation of iterative solvers for a sparse linear
system.
}


\section{Comparison between CCMF and existing related approaches}

\subsection{Min cut}
As detailed in Section 2.3, there is a difference between the CCMF and the
classic max flow formulation in the capacity constraint. Therefore,
the dual problems are different. As proved by Ford and
Fulkerson~\cite{FordFulk}, and formalized for example in
\cite{BhusnurmathPAMI08}, the min cut problem writes with our
notations of section 2.3
\begin{equation}
\min_u{\tilde{g}^T|Au+c|}.
\end{equation} 
We note that the $\ell_1$ norm of the gradient of the solution $u$ of
the above min-cut expression turns into an $\ell_2$ norm of the
gradient of the solution $\nu$ in the CCMF dual
\eqref{unsimplified_dual}.
\subsection{Primal Dual Total variations}
Unger {\it et al.}~\cite{ungerReport} proposed to optimize the
  following primal dual TV formulation
\begin{align}
\begin{gathered}
\min_u{\max_{F}\sum_{i,j}{(u_{i+1,j}-u_{i,j})F_{i,j}^1+(u_{i,j+1}-u_{i,j})F^2_{i,j}}
+ \mbox{data fidelity}},\\
\mbox{s.t. } \sqrt{(F_{i,j}^1)^2 + (F_{i,j}^2)^2} \leq g_{i,j}.\;\;\;
\end{gathered}
\label{unger_primal_dual_pb}
\end{align}
We can note that the CCMF flow capacity constraint in a 4-connected
lattice is similar to the constraint in \eqref{unger_primal_dual_pb}
with a slight modification, and would be with finite-element notations 
\begin{equation}
\sqrt{(F_{i-1,j}^1)^2 + (F_{i,j-1}^2)^2 + (F_{i,j}^1)^2 + (F_{i,j}^2)^2} \leq g_{i,j}.
\end{equation}
In contrast to this finite element discretization, the provided
CCMF formulation of continuous max-flow is defined arbitrary graphs.
Furthermore, we note that the optimization procedure employed to solve
\eqref{unger_primal_dual_pb} generalizes Appleton-Talbot's
algorithm.


\subsection{Combinatorial total variations}

We now compare the dual CCMF problem with the Combinatorial Total
Variation (CTV) problem. We note that the CCMF dual is different than
CTV and discuss the weak duality of the two problems.


We recall the continuous expression of total variation
given by Strang
\begin{align}
\begin{gathered}
 \min_u \iint_{\Omega}{~g~||\nabla u||~\mbox{ d}x\mbox{ d}y},\\
\text{s. t.}\;\;\; \int_{\Gamma} uf \mbox{ ds} = 1,\;\;\;
\end{gathered}
\label{eq:TVStrang}
\end{align}
where $\Omega$ represents the image domain and $\Gamma$ the boundary
of $\Omega$. The source and sink membership is represented by $f$,
such that $f(x,y)>0$ if $(x,y)$ belongs to the sink, $f(x,y)<0$ if
$(x,y)$ belongs to the source, and $f=0$ otherwise.  \omitme {Noting
$v=(x,y)$, we have

\begin{align}
\begin{gathered}
 \min_u \int_{\Omega}{~g~||\nabla u||~\mbox{ d}v},\\
\text{s. t.}\;\;\; \int_{\Gamma} uf \mbox{ ds} = 1.\;\;\;
\end{gathered}
\label{eq:TV}
\end{align}
}
Considering a transport graph $G$, and a vector $u$ defined on the nodes, this continuous problem may be written with combinatorial operators 

\begin{equation}
\begin{gathered}
\min_{u\in\R^n}  g^T \sqrt{|A^T|{(Au)}^2}, \\
\mbox{s. t. } \; u_s - u_t = 1,
\end{gathered}
\label{eq:CTV}
\end{equation}
where the square root operator of a vector $v = [v_1, \ldots,
  v_k]$ is an element-wise square root operator $\sqrt{v} =
  [\sqrt{v_1}, \ldots, \sqrt{v_k}]$.
Another way to write the same problem is 
\begin{equation}
\begin{gathered}
\min_u \sum_{v_i\in V}{g_i
{\sqrt{\sum_{e_{ij}\in E}{(u_i-u_j)^2}}}},\\ \mbox{s. t. }
u_s - u_t = 1.
\end{gathered}
\end{equation}
We note that in these equations the capacity $g$ must be defined on
the vertices. Although we describe this energy as ``combinatorial
total variation'', due to its derivation in discrete calculus terms,
it is important to note that this formulation is very similar
to the discretization which has appeared previously in the literature
from Gilboa and Osher \cite{gilboa2007nonlocal}, and Chambolle \cite{chambolle2004} (if we allow
$g=1$ everywhere).

\subsubsection{CCMF and CTV are not dual}

\omitme{
Given the solution $(F,\lambda,\nu)$ of CCMF and its dual, we may
define a \textit{$b$-cut}, or \textit{boundary-cut}, as a partition of $V$ such as 
\begin{equation*} 
\left\{
    \begin{array}{ll}
       v_i \in b ~~\mbox{ if } {|A^T|}_i F^2 = g_i,\mbox{ or
         equivalently } \lambda > 0\\
       v_i \in \overline{b}~~\mbox{ otherwise}.\\
    \end{array}
\right.
\end{equation*}

The \textit{capacity of a $b$-cut} is given by
\begin{equation*}
C = \sum_{ v_i \in b} 2\lambda_i g_i^2 = \sum_{ v_i \in V} 2\lambda_i g_i^2.
\end{equation*}
We can notice that the energy optimized by CCMF \eqref{unsimplified_dual} is not as simple as
the energy minimized by the classic max-flow (graph cuts) algorithm, because the $\lambda$ term is a parameter of the problem that we cannot eliminate.
}

Strang proved that the continuous max-flow problem is dual to the
total variation problem under some assumptions on $g$. Remarkably, this duality is not observed in
the discrete case in which the CCMF dual and CTV problems are given by
\begin{center}
\begin{tabular}{cc}
\\
$\min_{\lambda, \nu} ~\lambda^T g^2 + \frac{1}{4} {\bigg( \1^n
  \cdot/ (|A| \lambda) \bigg)^T \bigg( (c+A\nu)^2 \bigg)}$,~~~~~$\not\equiv$  &  ~~~~~$\min_u  g^T \sqrt{|A^T|{(Au)}^2}, $\\
$\mbox{s. t. } \lambda \geq 0$,~~~~~ & ~~~~~$\mbox{s. t. } \; u_s - u_t = 1.$\\
\\
\end{tabular}
\end{center}

We note that the analytic expressions are different and also not
equivalent.  The duality of the classical max-flow problem with the
minimum cut holds because in the expression of the Lagrangian function,
is possible to deduce a value of $\lambda$ by substituting it into the
dual problem so that it only depends on $\nu$.  However, $\lambda$ in
the dual CCMF problem \eqref{dual_lambda_nu} depends on several values
of neighboring $\lambda$ and $\nu$. Thus, the CCMF dual problem cannot
be simplified by removing a variable, for example by identifying $\nu$
and $u$ in the CTV problem. Said differently, combining a
  null-divergence constraint with an {\em anisotropic} capacity
  constraint in the classical max-flow formulation allows the duality
  to hold; in contrast, in the CCMF formulation, combining a
  null-divergence constraint with an {\em isotropic} capacity
  constraint does not allow such duality to hold.
 
Numerically we can also show on examples that the value $\max F_{st}$
of the flow optimizing CCMF is not equal to the minimum value of CTV
(See Table~\ref{ex}). 
\omitme{In fact, large differences may occur in image
segmentation, as shown in the example of Figure \ref{fig_angio}.}

\begin{table}[hbtp]
\begin{center}
{\small
\begin{tabular}{|p{0.23\textwidth}|p{0.35\textwidth}|p{0.31\textwidth}|}
\hline
\begin{center}  {\bf Transport graph with weights $g$ on the nodes} \end{center} & 
\begin{center} {\bf Combinatorial Continuous Max-Flow (CCMF) } \begin{eqnarray} 
     \nonumber \max_F & F_{st}, \\
     \nonumber \mbox{s. t. } & A^T F = 0,\\
     \nonumber~ & |A^T| F^2 \leq g^2.
\end{eqnarray} \end{center} & 
\begin{center} {\bf Combinatorial Total Variation (CTV)} \end{center} 
\begin{eqnarray}
\nonumber \min_u &~g^T \sqrt{|A^T|{(Au)}^2}, \\
\nonumber \mbox{s. t. } & u_s - u_t = 1.
\end{eqnarray}
\\
\hline
\begin{center}
  \includegraphics[width=0.8\linewidth]{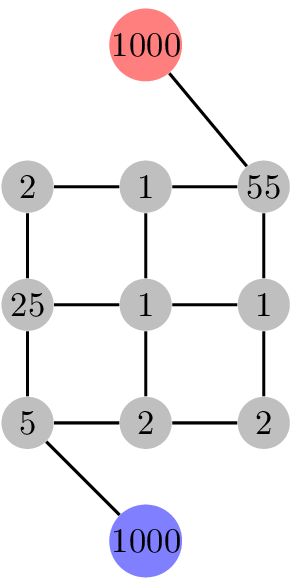} 
\end{center}&
\begin{center}
  \includegraphics[width=1\linewidth]{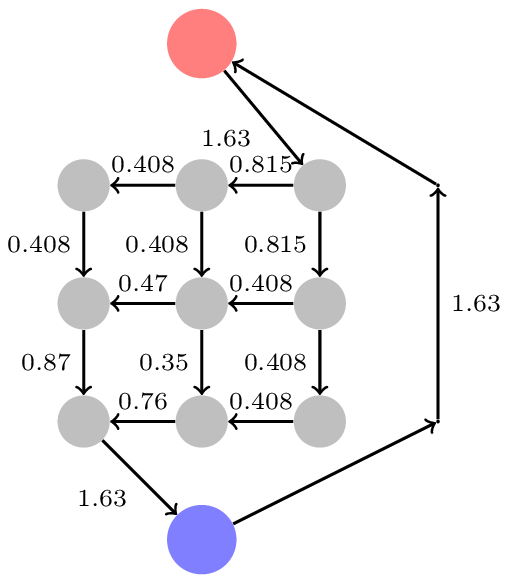} 
\end{center}&
\begin{center}
  \includegraphics[width=0.6\linewidth]{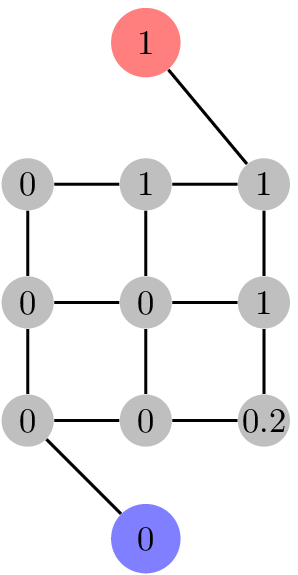} 
\end{center}\\
	\hline 
	~ &
\begin{center} $F_{st} = 1.63$ \end{center} & 
\begin{minipage}{0.3\textwidth}
  {\scriptsize
	\begin{eqnarray*} CTV(u) & = &\sum_i{g_i \sqrt{\sum_j{(u_i-u_j)^2}}}  \\
	 & = & 2 + 1 \times \sqrt{2} + 1 \times \sqrt{2}\\
         & & + 1 \times \sqrt{1+0.8^2}\\
         & & + 2 \times \sqrt{0.8^2+0.2^2}\\
         & & + 2\times 0.2\\
	 & = & 8.16.  
\end{eqnarray*}
}
	\end{minipage}
	\\ 
        \hline
	\end{tabular}
}
	\end{center}
	
	\caption{Example illustrating the difference between optimal
          solutions of combinatorial continuous maximum flow
          problem (CCMF), and combinatorial total
          variation (CTV).}
\label{ex}
\end{table}

 \omitme{
\begin{figure}[ht]
\begin{center}
\begin{tabular}{ccc}
\subfigure[]{\begin{frame}{\includegraphics[width=0.3 \linewidth]{angioseeds.eps}}\end{frame}}~~~
\subfigure[]{\begin{frame}{\includegraphics[width=0.3 \linewidth]{angioTV_43_75.eps}}\end{frame}}~~~
\subfigure[]{\begin{frame}{\includegraphics[width=0.3 \linewidth]{angioCCMF_MF_15_04.eps}}\end{frame}}
\end{tabular}
\end{center}
\caption {An example showing that the segmentations may
    differ significantly depending on whether the CTV or the CCMF
    energy is being optimized --- This example illustrates that CTV
    and CCMF are not dual to each other. (a) Original image with
  foreground and background seeds. (b) Segmentation obtained with
  Split Bregman algorithm of~\cite{Goldstein2009} for CTV optimization,
  (c) CCMF result. }
\label{fig_angio}
\end{figure}
}

\subsubsection{Theoretical links between CCMF and CTV}



Even if CCMF is not dual with CTV, the two problems
are weakly dual. 

In the combinatorial setting the weak duality property is given by 
\begin{equation}
 ||F|| \leq g   \Rightarrow  F^T (A u) \leq g^T ||A u||.
 \end{equation}

The next property shows that the norm $||F||  = |A^T|F^2$ verifies weak duality.

\begin{proposition}
Let $G$ be a transport graph, $F$ a flow in $G$ verifying the capacity
constraint $\sqrt{|A^T| F^2} \leq g$. Let $u$ be a vector of $\R^n$
(defined on nodes of $G$). 
Then
\begin{equation*}F^T Au \leq g^T \sqrt{|A^T| (Au)^2 }.\end{equation*}
\label{prop_weak} 
\end{proposition}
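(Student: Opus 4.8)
The plan is to prove the inequality vertex by vertex and then sum over all vertices. First I would expand the left-hand side edgewise: by the definition \eqref{eq:incidence} of the incidence matrix the $e_{ij}$-th component of $Au$ equals $u_i-u_j$, so $F^T Au = \sum_{e_{ij}\in E} F_{ij}(u_i-u_j) = \sum_{e\in E} F_e (Au)_e$. The key structural remark is that every edge of $G$ has exactly two endpoints, which lets me rewrite this edge sum as a sum over vertex--edge incidences, $F^T Au = \frac{1}{2}\sum_{v_i\in V}\sum_{e\ni v_i} F_e (Au)_e$, where $e\ni v_i$ ranges over the edges incident to $v_i$, i.e. those $e$ with $|A|_{e\,v_i}=1$. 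Equivalently, one may assign every edge to one of its two endpoints and split the edge sum along that partition; I would use whichever bookkeeping is most convenient.

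Next I would bound each inner sum by the Cauchy--Schwarz inequality applied to the two vectors $(F_e)_{e\ni v_i}$ and $((Au)_e)_{e\ni v_i}$, which gives $\sum_{e\ni v_i} F_e (Au)_e \le \sqrt{\sum_{e\ni v_i} F_e^2}\,\cdot\sqrt{\sum_{e\ni v_i}(Au)_e^2}$. The first factor on the right is exactly $\sqrt{(|A^T|F^2)_i}$, which by the hypothesis $\sqrt{|A^T|F^2}\le g$ is at most $g_i$; the second factor is exactly $\sqrt{(|A^T|(Au)^2)_i}$. If the inner sum happens to be negative the bound is immediate, so no sign analysis is needed. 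Hence, for each vertex $v_i$, $\sum_{e\ni v_i} F_e (Au)_e \le g_i\sqrt{(|A^T|(Au)^2)_i}$.

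Finally I would sum this over $v_i\in V$ and combine it with the incidence identity of the first step. Since $g>0$ and the square root is non-negative, the right-hand side $\sum_{v_i} g_i\sqrt{(|A^T|(Au)^2)_i} = g^T\sqrt{|A^T|(Au)^2}$ is non-negative, so the factor $\frac{1}{2}$ produced by the double count only introduces extra slack, and we conclude $F^T Au \le g^T\sqrt{|A^T|(Au)^2}$. I do not anticipate any real obstacle: the argument is essentially a per-vertex Cauchy--Schwarz together with the capacity constraint, and the only thing requiring care is the index bookkeeping that matches the per-vertex edge sums with the rows of $|A^T|$ and accounts for every edge being counted at both of its endpoints. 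It is worth noting that the divergence-free condition $A^T F = 0$ is never invoked --- only the capacity constraint is --- so the statement in fact holds for any edge function $F$ satisfying $\sqrt{|A^T|F^2}\le g$.
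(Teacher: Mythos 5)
Your proof is correct and follows essentially the same route as the paper's: a per-vertex Cauchy--Schwarz inequality on the incident-edge sums, combined with the capacity constraint $\sqrt{|A^T|F^2}\le g$. If anything, your bookkeeping is more careful than the paper's, which writes the double sum $\sum_{v_i\in V}\sum_{e_{ij}\in E} F_{ij}(u_i-u_j)$ as if it equalled $F^T Au$ without addressing the factor of two arising from counting each edge at both endpoints --- the slack you invoke (non-negativity of the right-hand side) is precisely what makes that step harmless.
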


\begin{proof}
Since we know that $\sqrt{|A^T| F^2} \leq g$, the following statement is true
\begin{equation*}\sqrt{|A^T| F^2}^T \sqrt{|A^T| (Au)^2} \leq g^T
  \sqrt{|A^T|(Au^2}.
\end{equation*}
We can now show that
\begin{equation*}F^T Au \leq {\sqrt{(|A^T| F^2)}}^T \sqrt{|A^T|
    (Au)^2}.
 \end{equation*}
\omitme{We recall Cauchy-Schwartz inequality for two vectors $x,y$ of $\R^n$
  \begin{equation*}
\sum_{i=1}^n{x_iy_i} \leq
\sqrt{\left(\sum_{i=1}^n{x_i^2}\right)\left(\sum_{i=1}^n{y_i^2}\right) } .
 \end{equation*}
}
Using summation notation, then by the Cauchy-Schwartz inequality,
\begin{equation*}
\sum_{i\in V}{\sum_{e_{ij}\in E}{F_{ij}(u_i-u_j)}} \leq \sum_{i\in
  V}{\sqrt{\left( \sum_{e_{ij}\in E}{{F_{ij}}^2}\right) \left(\sum_{e_{ij}\in E}{(u_i-u_j)^2}\right)}}.
 \end{equation*}
We conclude that
\begin{equation*} F^T Au \leq {\sqrt{(|A^T| F^2)}}^T
  \sqrt{|A^T| (Au)^2} \leq g^T \sqrt{|A^T| (Au)^2 }.\end{equation*}
\end{proof}

In terms of energy value, this proposition means that the CCMF energy,
that is to say the flow, is always smaller than the combinatorial
total variation.

\begin{proposition}
Let $F$ be a compatible flow verifying the constraints in \eqref{f_new}, and $u$ a vector of $\R^n$
such as $u_s-u_t=1$.
Then, \begin{equation}  F_{st} \leq g^T\sqrt{A^T(Au)^2}.\end{equation}
\end{proposition}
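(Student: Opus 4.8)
The plan is to reduce the statement to Proposition~\ref{prop_weak} after peeling off the distinguished source/sink edge. A direct application of Proposition~\ref{prop_weak} to $F$ is vacuous: from $A^TF=0$ we get $F^TAu=(A^TF)^Tu=0$, so its inequality only gives the trivial $0\le g^T\sqrt{|A^T|(Au)^2}$. The useful content comes from isolating the flow carried by the source/sink edge.

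First I would write $F=F_{st}\,c+\tilde F$, where $\tilde F:=F-F_{st}\,c$ is the flow obtained from $F$ by zeroing its source/sink component. Pairing the divergence-free constraint $A^TF=0$ with the node vector $u$ gives $(Au)^TF=0$. Since $c$ is the indicator of the source/sink edge, $c^T(Au)$ is exactly the source/sink entry of $Au$, which by the hypothesis $u_s-u_t=1$ equals $\pm1$; without loss of generality (replacing $u$ by $-u$ if needed, which leaves the final bound unchanged) we may take $c^T(Au)=-1$, and expanding $(Au)^TF=F_{st}\,c^T(Au)+(Au)^T\tilde F=0$ yields
\begin{equation*}
F_{st}=(Au)^T\tilde F=\tilde F^TAu.
\end{equation*}

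Next I would check that $\tilde F$ remains feasible for the capacity constraint of \eqref{f_new}: it coincides with $F$ on every edge except the source/sink edge, where it vanishes, so $\tilde F^2\le F^2$ componentwise, and since $|A^T|$ has nonnegative entries this gives $|A^T|\tilde F^2\le|A^T|F^2\le g^2$, i.e.\ $\sqrt{|A^T|\tilde F^2}\le g$. Thus $\tilde F$ satisfies the hypothesis of Proposition~\ref{prop_weak}, which applied to the flow $\tilde F$ and the node vector $u$ gives
\begin{equation*}
F_{st}=\tilde F^TAu\le g^T\sqrt{|A^T|(Au)^2},
\end{equation*}
the desired inequality. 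I also note that the right-hand side is precisely the combinatorial total variation of $u$ from \eqref{eq:CTV}, so taking the supremum over compatible flows and the infimum over feasible $u$ recovers the weak-duality bound between the CCMF value and the CTV minimum. The only slightly delicate point is the bookkeeping around the source/sink edge: one must argue about the truncated flow $\tilde F$ rather than $F$ (for which the bound is trivial) and verify that truncation preserves the capacity constraint; once that is done the statement is an immediate corollary of Proposition~\ref{prop_weak}.
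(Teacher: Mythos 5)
Your proof is correct and follows essentially the same route as the paper's: both isolate the source/sink edge so that pairing the divergence constraint with $u$ yields $F_{st}$, and then conclude by Proposition~\ref{prop_weak}. The paper's bookkeeping deletes the source/sink edge from the incidence matrix and rewrites the divergence constraint as $A^TF=F_{st}\,a$ with $a$ the source/sink indicator, whereas you keep the full matrix and truncate the flow via $F=F_{st}\,c+\tilde F$; the two are equivalent, and your explicit check that the truncated flow still satisfies the capacity constraint is a detail the paper leaves implicit.
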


\begin{proof}
 In the combinatorial setting, the Green formula gives
\begin{equation*} 
F^T A u = u^T A^T F.
\end{equation*}
Let $a$ be a vector of $\R^n$ defined for each node $v_i$ as 
\begin{equation} 
\begin{gathered}
a_i = \left\{
    \begin{array}{lll}
        -1 &\mbox{ if } v_i \mbox{ belongs to the sink,} \\
        ~1 &\mbox{ if } v_i \mbox{ belongs to the source,} \\
        ~0 &\mbox{  otherwise.}\\
    \end{array}
\right.
\label{eq:b}
\end{gathered}
\end{equation}
We can provide the following equivalent formulation of the CCMF problem~\eqref{f_new} 
  using an incidence matrix $A$ of the transport graph deprived of the
  sink-source edge:
\begin{align}
\begin{gathered}
\max {F_{st}},\;\;\;\;\;\;\\
\text{s.t.}\;\;\;  A^T F = F_{st} a,\\
 |A^T| F^2 \leq g.
 \end{gathered}
 \end{align}
 As $F$ verifies the divergence free constraint $A^TF=F_{st} a$,
 \begin{equation*} 
u^T A^T F = u^T F_{st} a,
\end{equation*}
and as the equation $u_s-u_t=1$ may be written equivalently $a^Tu=1$, we conclude that
 \begin{equation*} 
 u^T F_{st} a = F_{st}, 
\end{equation*}
and by weak duality (Property~\ref{prop_weak}),  
\begin{equation*} 
F_{st} \leq g^T\sqrt{A^T(Au)^2}.
\end{equation*}
\end{proof}

This property should be of interest for extending CCMF to applications other
than segmentation, which is outside the scope of this paper. In the
next section, we present the performance of our approach in the context of segmentation.

\omitme{

\subsubsection{Optimization of CTV}

[LJG: I'm not completely clear why we're addressing the optimization
  of CTV.  It seems that we're ultimately using exactly the same
  optimization scheme used elsewhere for TV (Split-Bregman).]
We may express the boundary conditions in the following different but
equivalent way, 
\begin{eqnarray}
\label{discretized_dual}
\nonumber \min~ & g^T \sqrt{|A^T|{(Au)}^2}, \\
\mbox{s. t. } & a^T u = 1,
\end{eqnarray}
with the vector $a$ defined in \eqref{eq:b}.

In order to impose the boundary conditions, the equality constraint may
be placed into the objective function, squared, and multiplied by a Lagrange
multiplier $\lambda$. Thus the objective function becomes

\begin{equation}
 CTV(u) = \sum_{v_i \in V}{ g_i \sqrt{\sum_{e_{ij} \in E}
     {(u_j-u_i)^2}} }+\lambda \left(\sum_{v_i \in V}{a_i u_i} - 1\right)^2. \\
\label{TVseg}
\end{equation}
This optimization problem is convex, so may be solved by a gradient
descent, discrete diffusion process \cite{bougleux}, or a faster
technique like the extended nonlocal split Bregman algorithm. 

In \cite{NonLocalBregman}, Zhang{\it ~et al.} extended the split
Bregman algorithm originaly introduced in \cite{GoldsteinOsher} for
solving TV problems. The extended nonlocal split Bregman algorithm
differs by the use of a nonlocal TV norm instead of the standard TV
norm. The nonlocal gradient of $u$ in  \cite{NonLocalBregman} is
defined by mean of weighted edges

\begin{equation} 
\nabla u_{ij} = \sqrt{g_{ij}}(u_j-u_i).
\end{equation}

We propose to use the extended nonlocal split Bregman algorithm with
slightly different definition of the gradient $\nabla u \in \R^m $ using weighted nodes

\begin{equation} 
\nabla u_{ij} = g_{i}(u_j-u_i).
\end{equation}

%

Introducing a new variable $d \in \R^m$, our problem of minimizing \eqref{TVseg} may
be decomposed the following way

\begin{align}
\begin{gathered}
 \min_{u, d} \sum_{v_i \in V}{||d||} +\frac{\lambda}{2}\left(\sum_{v_i \in V}{a_i u_i} - 1\right)^2 \\
\mbox{s. t. }  d = \nabla u  . 
\end{gathered}
\end{align}

Here the norm of $\nabla u$ is given by $||\nabla u|| = \sqrt{\sum_{e_{ij} \in E}{g_i^2(u_j-u_i)^2}}$.

In order to enforce the constraint, another variable $b \in \R^m$ is
introduced, and the solution to the minimization of \eqref{TVseg} may
be obtained by an alternated minimization process

\begin{align}
\begin{gathered}
\label{eq3}
(u^{k+1}, d^{k+1}) =  \argmin_{u, d} \sum_{v_i \in
  V}{||d||}+\frac{\lambda}{2} \left(\sum_{v_i \in V}{a_i u_i} - 1\right)^2 + \frac{\beta}{2} \sum_{v_i \in
  V}{||
     d - \nabla u - b^k||^2 }   \\ 
 b^{k+1} = b^k+\nabla u^{k+1} -d^{k+1}.
\end{gathered}
\end{align}

When tested on real image, this algorithm may produce a solution which
is not entirely binary (e.g., as in Figure \ref{fig_angio}.b), which
is contrary to the expected continuous solution \cite{strang1983}
[LJG: Didn't Chambolle prove that these solutions could be binarized
by thresholding without effecting the energy of the solution?].  In
terms of speed, the optimization of CTV on the $100 \times 100$ image
of Fig.~\ref{fig_angio} requires 5000 iterations and takes 23 seconds
with a sequential implementation in C. CCMF required only 17
iterations to reach the convergence criteria $||r_d|| < 1$ and
$||\widehat{\eta}|| < 2$, taking 4.7 seconds with a Matlab
implementation (using a 2-threaded solver). The optimization of CTV
appears to be really slow for image segmentation, whereas the Split
Bregman algorithm employed here is one of the fastest algorithms for
TV optimization for image denoising [LJG: I'm confused --- Didn't Pock
et al show good speeds for TV optimization?]. This difference in speed
between the denoising and segmentation applications is due to the very
large number of required iterations for convergence in image
segmentation.

As the minimisation of CTV is significantly slower than CCMF, we may
  hope to use adaptations of CCMF to efficiently optimize energies
  similar to CTV in various other applications.
}

\section{Results} 

We now present applications of Combinatorial Continuous Maximum Flow
in image segmentation. To be used as a segmentation algorithm, three
solutions are possible: the dual variable $\nu$ may be used directly
if the user needs a matted result, otherwise $\nu$ may be thresholded,
or finally an isoline or isosurface may be extracted from $\nu$.

\begin{figure}[h]
\begin{center}
\begin{tabular}{cccc}
\subfigure[]{\includegraphics[width=0.24 \linewidth]{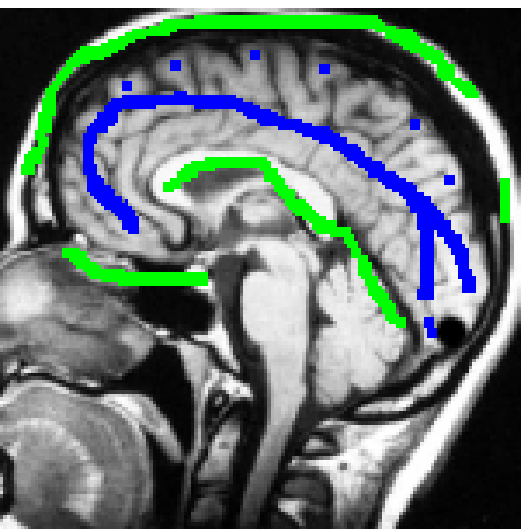}}
\subfigure[]{\includegraphics[width=0.24
    \linewidth]{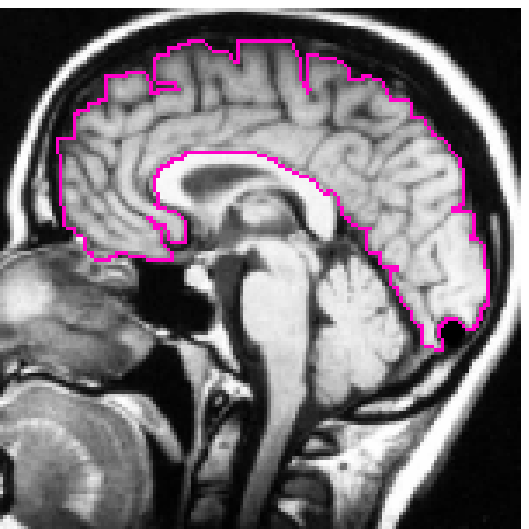}}
\subfigure[]{\includegraphics[width=0.24
    \linewidth]{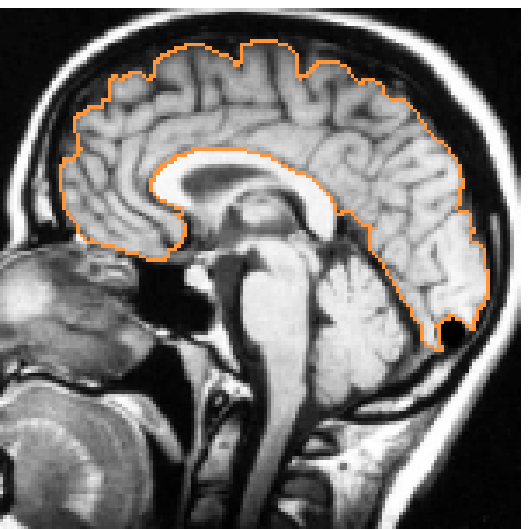}}
\subfigure[]{\includegraphics[width=0.24
    \linewidth]{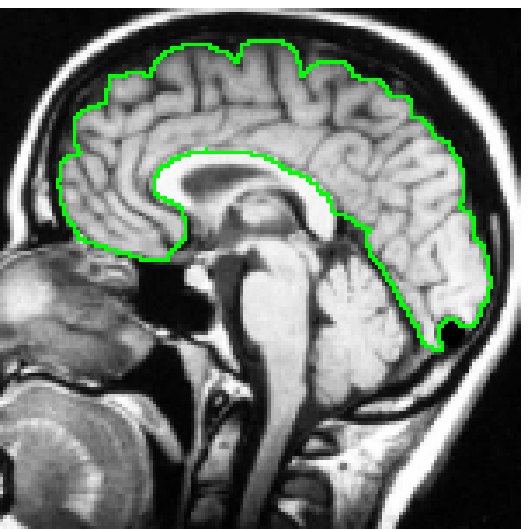}}
\end{tabular}
\begin{tabular}{cccccc}
{\scriptsize
\begin{tabular}{cc}
\includegraphics[width=0.1 \linewidth]{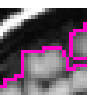}\\GC
\end{tabular}\begin{tabular}{c}\includegraphics[width=0.1 \linewidth]{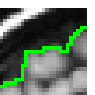}\\CCMF\end{tabular}~~~~
\begin{tabular}{c}\includegraphics[width=0.12 \linewidth]{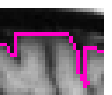}\\GC\end{tabular}
\begin{tabular}{c}\includegraphics[width=0.12 \linewidth]{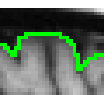}\\CCMF\end{tabular}~~~~
\begin{tabular}{c}\includegraphics[width=0.1 \linewidth]{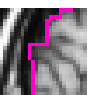}\\GC\end{tabular}
\begin{tabular}{c}\includegraphics[width=0.1 \linewidth]{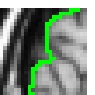}\\CCMF\end{tabular}}
\end{tabular}
\end{center}
\caption {Brain segmentation. (a) Original image with foreground and
  background seeds. (b,c,d) Segmentation obtained with (b) graph cuts (GC),
  (c) AT-CMF, threshold of $P$ (obtained after 10000 iterations), (d)
  CCMF, threshold of $\nu$ (15 iterations).
}
\label{fig_brain}
\end{figure}

In the introduction we discussed how several works are related to
CMF. Some are equivalent or slight modifications of AT-CMF for
non-segmentation applications~\cite{unger2008tvseg,
  ungerReport,pock2008ECCV,zach2008}. As in the original AT-CMF work,
none of these come with a convergence proof. In contrast, the work of
\cite{Pock2009} and~\cite{Lippert2006} are provably convergent but
both are very slow, and do not generalize easily to 3D image
segmentation. Consequently, it seems reasonable to compare our
segmentation method to the original AT-CMF as representative of the
continuous approach, as well as graph cuts, which represent the purely
discrete case. Finally, even if the total variation problem is
  different than the CMF problem (existence of a duality gap in the
  continuous \cite{nozawa1994duality}), the two problems are related
  enough to be compared. Specifically, we will also compare with Chambolle
  and Pock's recent work~\cite{ChambollePock2010} which presented an
  algorithm for optimizing a TV-based energy for image segmentation.

Our validation is intended to establish three properties of the CCMF algorithm.
First, we establish that the CCMF does avoid the metrication artifacts exhibited
by conventional graph cuts (on a 4-connected lattice).  This property is
established by examples on a natural image and the recovery of the classical
catenoid structure as the minimal surface spanning two rings. Second, we compare
the convergence of the CCMF algorithm to the AT-CMF algorithm to show that the
CCMF algorithm converges more quickly and in a more stable fashion.
Finally, we establish that our formulation of the CMF problem does not degrade
segmentation performance on a standard database. In fact, because of the
reduction in metrication error our algorithm gives improved numerical
results.
For this experiment, we use the GrabCut database to compare the quality of the
segmentation algorithms. In addition to the above tests, we demonstrate through
examples that the CCMF algorithm is also flexible enough to incorporate prior
(unary) terms and to operate in 3D. 

\begin{figure}[ht]
\begin{center}
\begin{tabular}{ccc}
 \subfigure[]
    {\begin{frame}{\includegraphics[width=0.3 \linewidth]{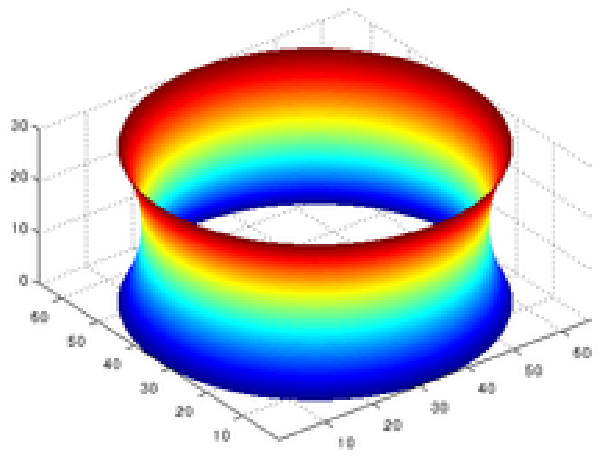}}\end{frame}}
 \subfigure[]
    {\begin{frame}{\includegraphics[width=0.3 \linewidth]{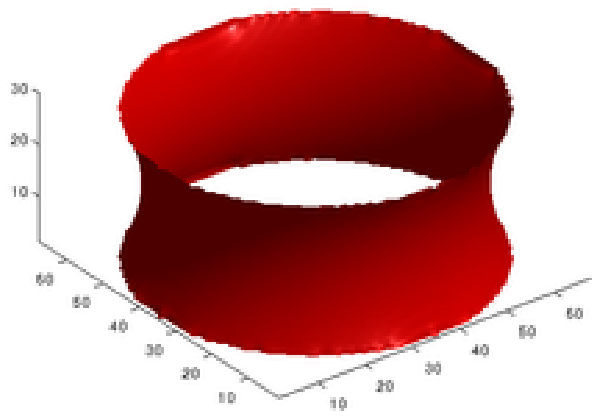}}\end{frame} }
 \subfigure[]
    {\begin{frame}{\includegraphics[width=0.3
      \linewidth]{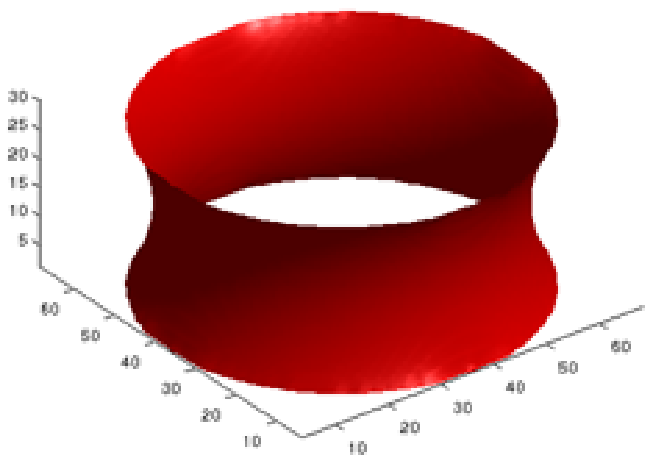}}\end{frame}  }
\end{tabular}
\end{center}
\caption {The catenoid test problem: The source is constituted by two
  full circles and sink by the remaining boundary of the image. (a)
  Surface computed analytically, (b) isosurface of $P$ obtained by
  CMF, (c) isosurface of $\nu$ obtained by CCMF. The root mean square
  error (RSME) has been computed to evaluate the precision of the
  results to the surface computed analytically. The RSME for CMF is
  1.98 and for CCMF 0.75. The difference between those results is due
  to the fact that the CMF algorithm enforces exactly the source and
  sink points, leading to discretization around the disks. In
  contrast, the boundary localized around the seeds of $\nu$ is
  smooth, composed of grey levels. Thus the resulting isosurface
  computed by CCMF is more precise.}
\label{fig_cate}  
\end{figure}

\subsection{Metrication artifacts and minimal surfaces}

We begin by comparing the CCMF segmentation result with the classical
max-flow algorithm (graph cuts).  Figure \ref{fig_brain} shows the
segmentation of a brain, in which the contours obtained by graph cuts
are noticeably blocky in the areas of weak gradient, while the
contours obtained by both AT-CMF and CCMF are smooth.

In the continuous setting, the maximum flow computed in a 3D volume
produces a minimal surface. The CCMF formulation may be also
recognized as a minimal surface problem. In the dual formulation, the
objective function is equivalent to a weighted sum of surface nodes.
In \cite{hugues}, Appleton and Talbot compared the surfaces obtained
from their algorithm with the analytic solution of the catenoid
problem to demonstrate that their algorithm was a good approximation
of the continuous minimal surface and was not creating discretization
artifacts.  The catenoid problem arises from consideration of two
circles with equal radius whose centers lie along the $z$ axis. The
minimal surface which forms to connect the two circles is known as a
catenoid. The catenoid appears in nature, for example by creating a
soap bubble between two rings. In order to demonstrate that CCMF is
also finding a minimal surface, we performed the same catenoid
experiment as was in \cite{hugues}.  The results are displayed in
Figure \ref{fig_cate}, where we show that CCMF approximates the
analytical solution of the catenoid with even greater fidelity than
the AT-CMF example.

\begin{figure}[ht]
\begin{center}
\begin{tabular}{ccc}
\subfigure[Input seeds]{
\begin{tabular}{c}
\includegraphics[width=0.17
  \linewidth]{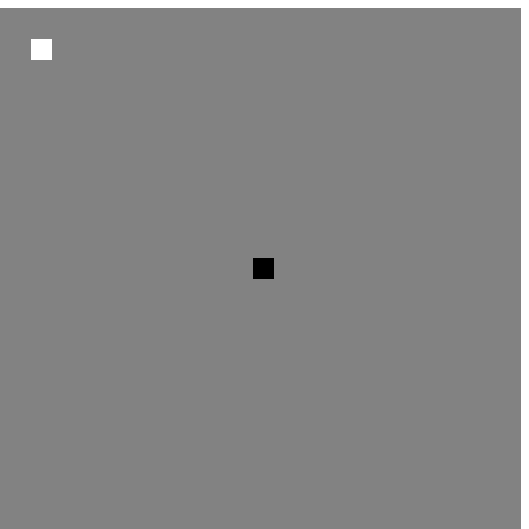}
\end{tabular}}&
\subfigure[Input images]{
\begin{tabular}{ccc}
 \includegraphics[width=0.056 \linewidth]{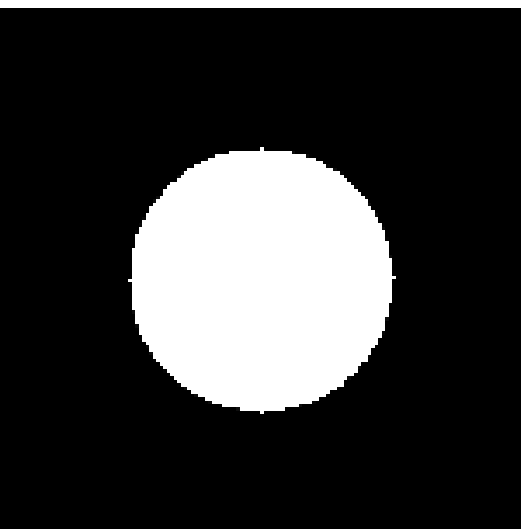}&
 \includegraphics[width=0.056 \linewidth]{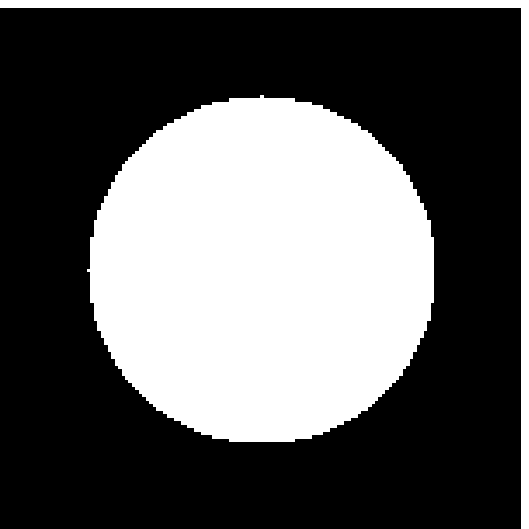}&
 \includegraphics[width=0.056 \linewidth]{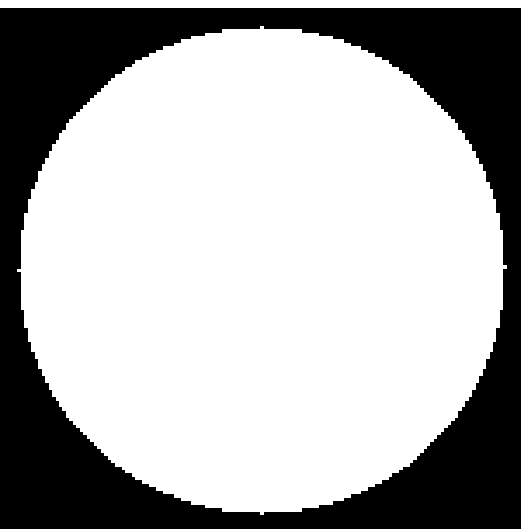}\\
\includegraphics[width=0.056 \linewidth]{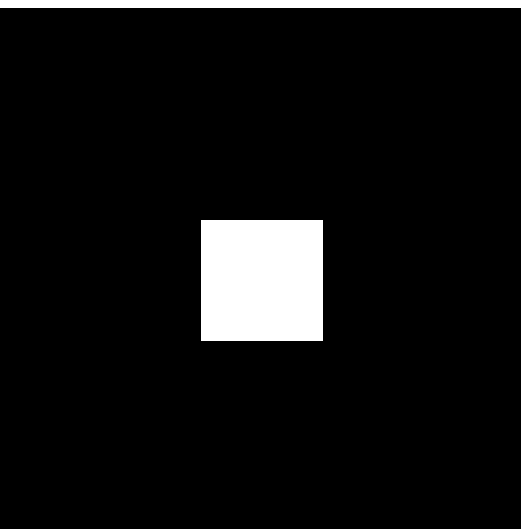}&
 \includegraphics[width=0.056 \linewidth]{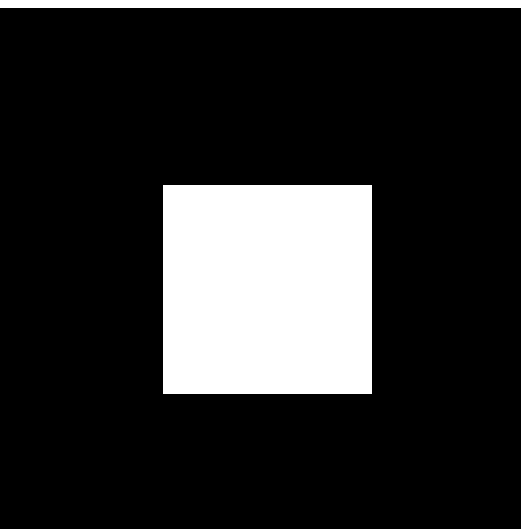}&
 \includegraphics[width=0.056 \linewidth]{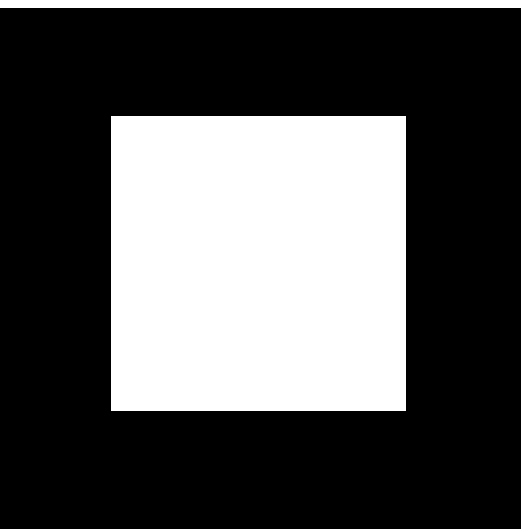}\\
\includegraphics[width=0.056 \linewidth]{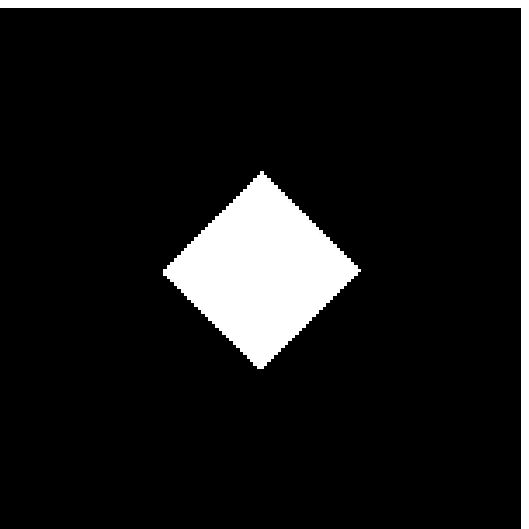}&
 \includegraphics[width=0.056 \linewidth]{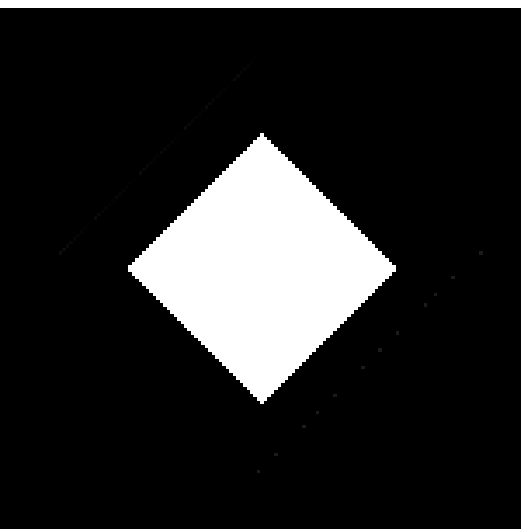}&
 \includegraphics[width=0.056 \linewidth]{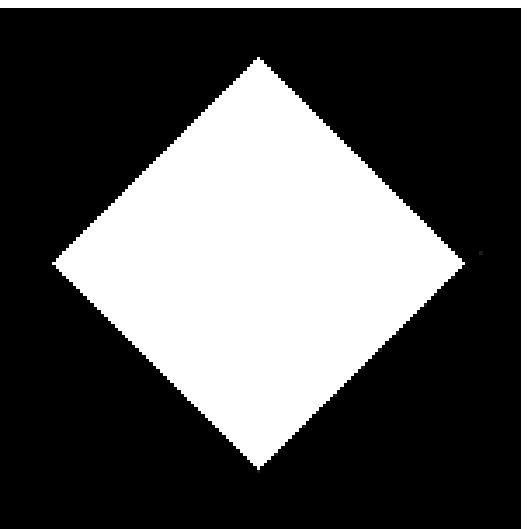}
\end{tabular}}&
\begin{tabular}{c}
\subfigure[CCMF energy vs true perimeters]{\includegraphics[width=0.5 \linewidth]{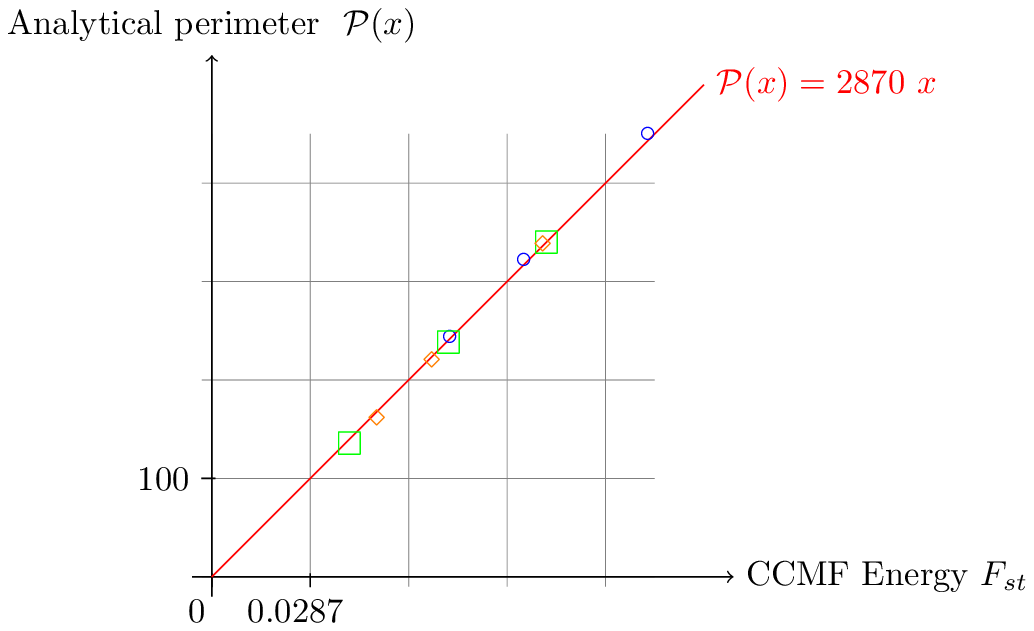}}
\end{tabular}
\end{tabular}

\begin{tabular}{cc}
\subfigure[Graph cut cost for lines of same length]{
\begin{tabular}{cc}
\includegraphics[width=0.11
    \linewidth]{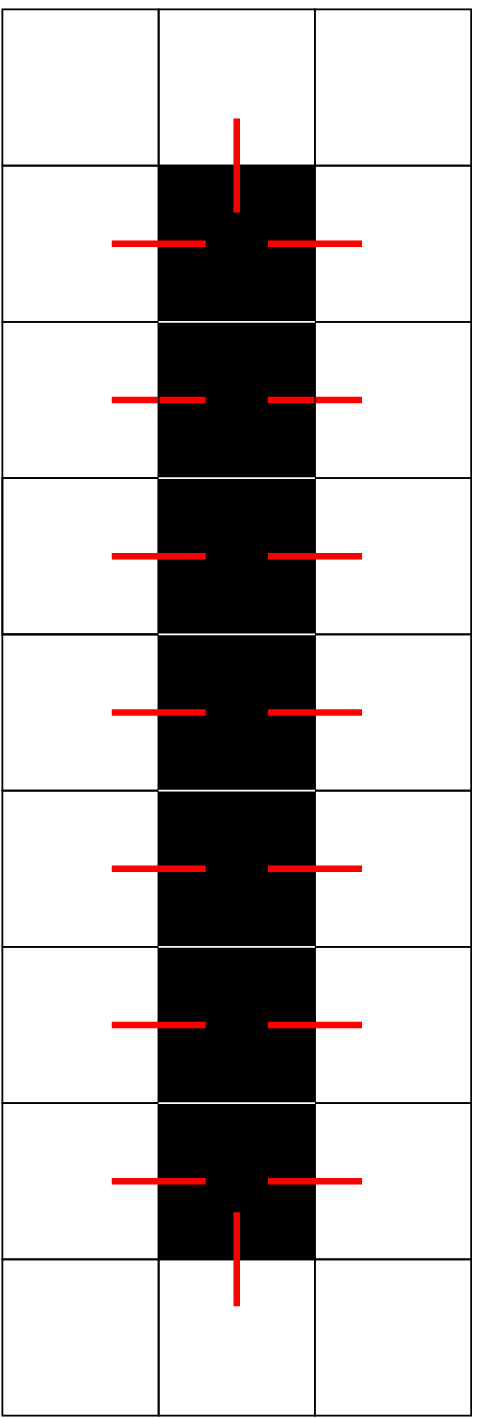}& \includegraphics[width=0.24
    \linewidth]{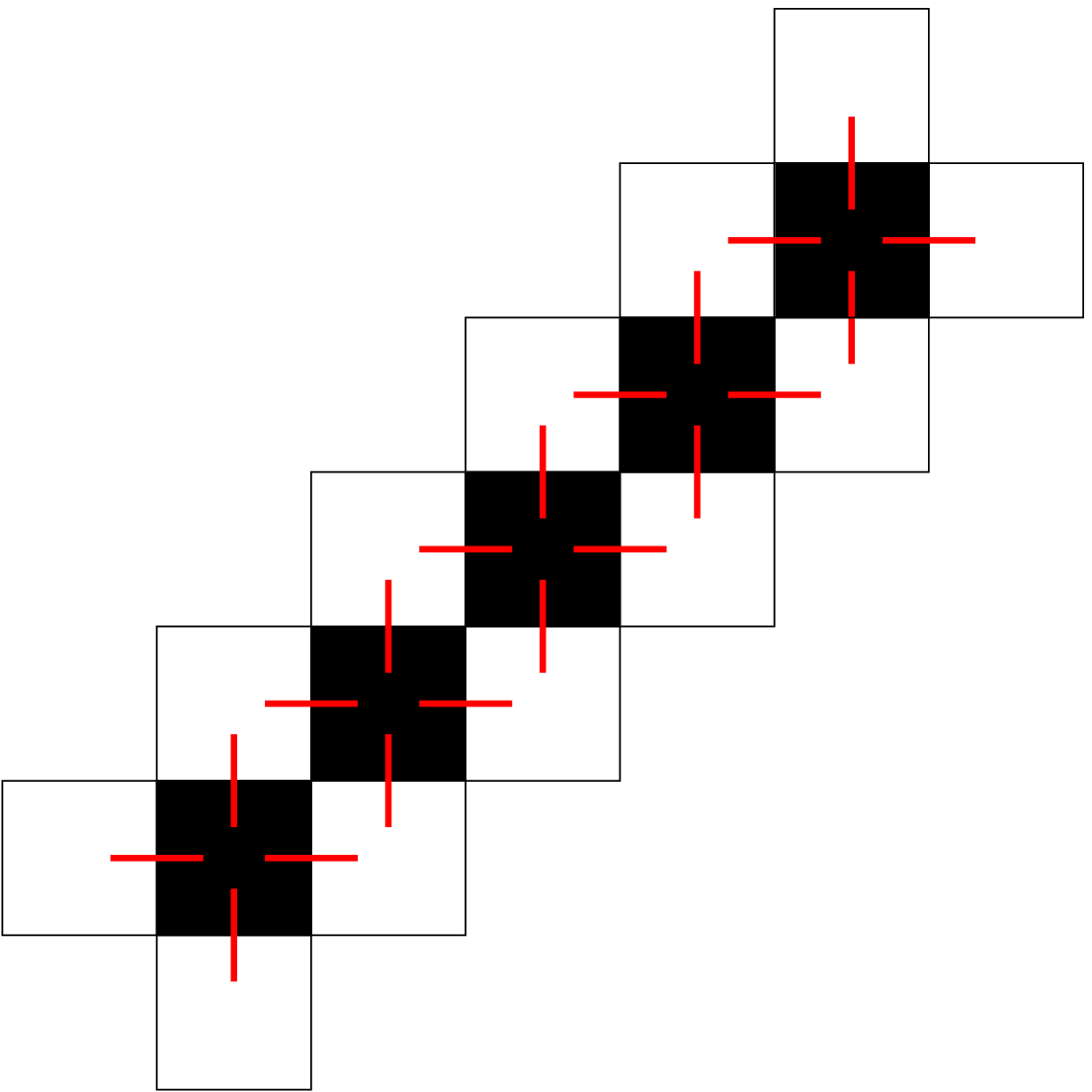}\\
GC cut cost : 16 & GC cut cost : 20 
\end{tabular}
 } &
\begin{tabular}{c}
\subfigure[GC energy vs true perimeters]{\includegraphics[width=0.5 \linewidth]{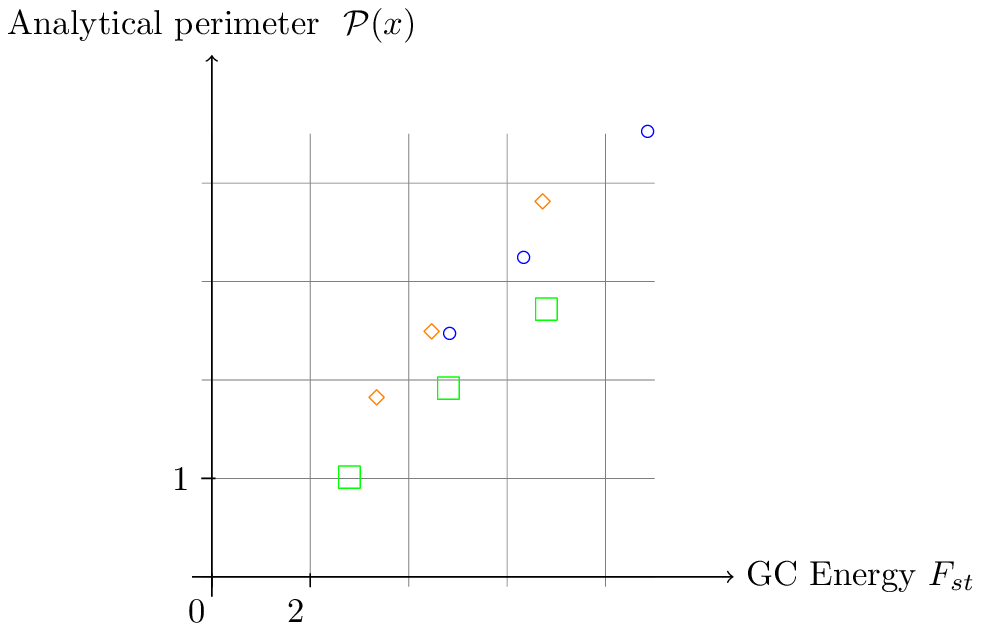}}
\end{tabular}
\end{tabular}
\end{center}
\caption {\change{Comparison of perimeters computed analytically
    and with Graph cuts and CCMF. To estimate the boundary length with
    CCMF, segmentations have been produced using the input images in
    (b) with the foreground and background seeds that appears in
    (a). (c) : plot showing the linear relation between the true
    perimeters and the energies obtained with CCMF. The symbols used
    to represent the dots correspond to the objects segmented in the
    images.  A similar relation of
    proportionality is obtained with CP-TV. (d) : value of the cut obtained for a diagonal and a
    vertical line of same length using graph cuts. The difference
    observed explains (e) : plot showing the nonlinear relation
    between the true perimeters and the energies obtained with Graph
    cuts.}}
\label{test_perimeter} 
\end{figure}

In order to verify that the solution obtained by CCMF
  approximates perimeters of planar objects in 2D, we segmented
  several shapes -- squares, discs -- with known perimeters, scaled at
  different size, and compared the analytic perimeters with the
  energies obtained by CCMF. The results are reported in
  Fig.~\ref{test_perimeter}, where we observe that the true boundary
  length is proportional to the energy obtained with CCMF.

\subsection{Stability, convergence and speed}

We may compare the segmentation results using $\nu$ to
Appleton-Talbot's result using $P$.  We recall that AT-CMF solves the
partial differential equation system \eqref{AT} in order to solve the
continuous maximum flow problem \eqref{eq:cont_MF}, but no proof was
given for convergence. The potential function $P$ approximates an
indicator function, with $0$ values for the background labels, and $1$
for the foreground. It can be difficult to know when to stop the
AT-CMF algorithm, since the iterations used to solve the AT-CMF
algorithm may oscillate, as displayed in Figure \ref{fig_diag} on a
synthetic image.  In contrast, the CCMF algorithm is guaranteed to
converge and smoothly approaches the optimum solution.

We also compare segmentation results with results obtained by
  the recent algorithm of Chambolle and Pock~\cite{ChambollePock2010}
  optimizing a total variation based energy using a dual variable. In
  comparison to our work, the dual variable is not a flow (no
  constraint on the divergence and no proof of convergence toward a
  minimal surface as defined in \cite{strang1983} or
  \cite{hugues}). In the literature, the continuous-domain duality
  between TV and max-flow is assumed, but in computational practice
  they are really different. Nevertheless, even as the method of
  Chambolle-Pock is solving a different problem, we performed
  numerical comparative tests. The problem of Chambolle and Pock (now
  denoted CP-TV) for binary segmentation optimizes
\begin{equation}
\min_{u\in[0,1]^n} \max_{F\in\R^m, ||F||_{\infty}\leq 1} {F^T(Au) +
  g^Tu} + \mbox{ hard constraints attachment},
\end{equation}
where $A \in \R^{m\times n}$ is the incidence matrix of the graph of
$n$ nodes and $m$ edges defined on the image, $g\in\R^n$ is the metric
on the nodes, defined in equation \eqref{eq_exp_metric}.

The CCMF algorithm is faster than both AT-CMF and CP-TV for 2D image
segmentation. We have implemented CCMF in Matlab. We used an
implementation of Appleton-Talbot's algorithm in {\tt C++} provided by
the authors, and a Matlab software implementing CP-TV on CPU, also
provided by the respective authors. The three implementations make use
of multi-threaded parallelization, and the CPU times reported here
were computed on a Intel Core 2 Duo (CPU 3.00GHz) processor, with 2 Gb
of RAM. The CCMF average computation time on a $321 \times 481$ image
is 181 seconds after 21 iterations. For AT-CMF, 80000 iterations
require 547 seconds. For CP-TV, {\change the average number of
  iterations needed for convergence is 36000, requiring an average
  time of 1961 seconds on CPU, and the median computation time is 1406
  seconds.} Note that a GPU implementation of CP-TV, also provided by
the authors, achieves a speed gain factor of about 100 on a NVidia
Quadro 3700. Although it was not pursued here, it should be noted that
the CCMF optimization approach could also fully benefit from
parallelization (e.g., on a GPU) because the core computation is to
solve a linear system of equations. Realizing the benefit of a GPU
solution would require the use of an iterative algorithm such as
conjugate gradients or multigrid~\cite{bolz2003sparse,kruger2003GPU},
which would make comparisons to the direct solver used here less
immediate.  However, previous examples in the literature have
suggested that this change in solver has not created difficulties in
order to realize the benefits of a GPU
architecture~\cite{grady2005GPU}.

Sometimes, it may not always be necessary to wait so long for the
complete convergence of CCMF (or AT-CMF) to obtain acceptable
results. In cases where images exhibiting sufficiently strong
gradients, one iteration may be enough to obtain a satisfying
segmentation. On such an image, one iteration of CCMF, and 100
iterations of AT-CMF show acceptable approximate results reached only
after about 2 seconds for either algorithm. However, one cannot always
rely on strong edges, and the power of our CCMF formulation is to
behave in a predictable fashion even when edges are weak.

We may also compare the computation time of CCMF to the optimization of total
variation using the Split Bregman method~\cite{GoldsteinOsher}.  Optimizing TV
on a $100 \times 100$ image requires 5000 iterations and takes 23 seconds with a
sequential implementation of Split Bregman in C. On the same image, CCMF
required only 17 iterations to reach the convergence criteria $||r_d|| < 1$ and
$||\widehat{\eta}|| < 2$, taking 4.7 seconds with a Matlab implementation (using
a 2-threaded solver). We conclude that TV-based methods appear to be
quite slow in the context of image segmentation, although we employed the Split
Bregman algorithm which is known as one of the fastest algorithms for TV
optimization for image denoising, and the very recent CP-TV
  algorithm. This difference in speed between the denoising and segmentation
applications is due to the very large number of iterations required to
convergence to a binary segmentation.

\begin{figure}[ht]
\begin{center}
\begin{tabular}{ccccccc}
 \subfigure[]{\begin{frame}{\includegraphics[width=0.23
         \linewidth]{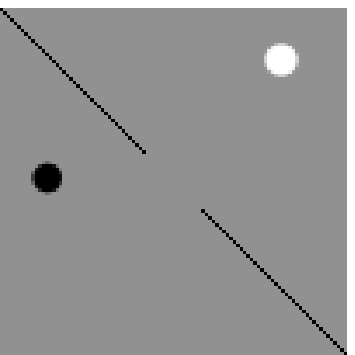}}\end{frame}}
\subfigure[]{\begin{frame}{\includegraphics[width=0.23 \linewidth]{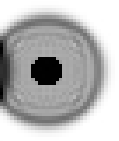}}\end{frame} }
\subfigure[]{\begin{frame}{\includegraphics[width=0.23 \linewidth]{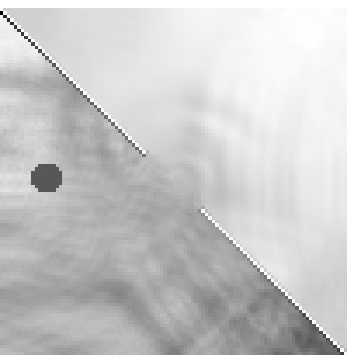}}\end{frame} }
\subfigure[]{\begin{frame}{\includegraphics[width=0.23
         \linewidth]{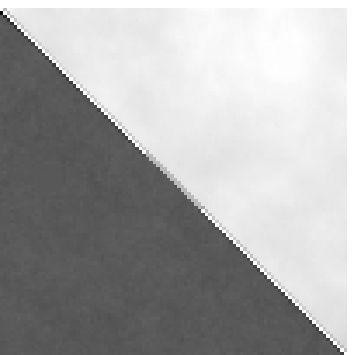}}\end{frame} }\\
 \subfigure[]{\begin{frame}{\includegraphics[width=0.23
         \linewidth]{diag100_withseeds.eps}}\end{frame}}
\subfigure[]{\begin{frame}{\includegraphics[width=0.23 \linewidth]{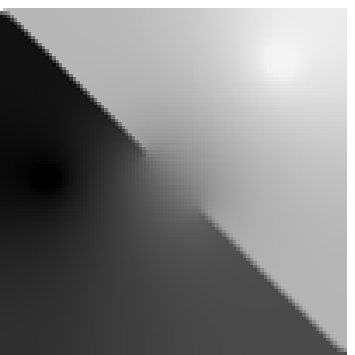}}\end{frame} }
\subfigure[]{\begin{frame}{\includegraphics[width=0.23 \linewidth]{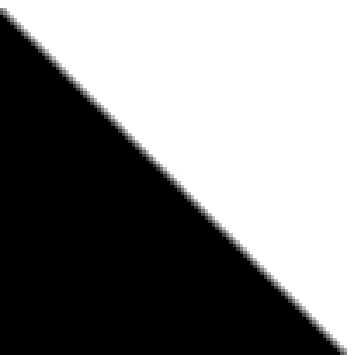}}\end{frame} }
\subfigure[]{\begin{frame}{\includegraphics[width=0.23 \linewidth]{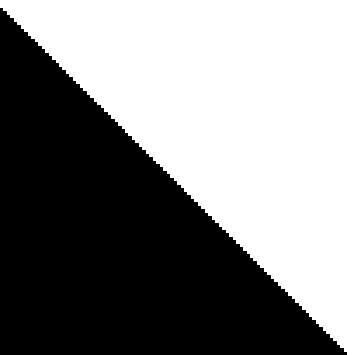}}\end{frame} }
\end{tabular}
\end{center}
\caption {Segmentation of an artificial image with AT-CMF (top row) and
  CCMF (bottom row). Top row (AT-CMF): (a) Image where the black and white discs are seeds. AT-CMF result stopped in (b) after 100 iterations, (c) 1000
  iterations, (d) 10000 iterations. Bottom row (CCMF): (e) Image where the black and white discs are
  seeds, CCMF result $\nu$ after (f) 1 iterations, (g) $\nu$ after 15 iterations
  iterations and (h) threshold of the final $\nu$.}
\label{fig_diag} 
\end{figure}

\subsection{Segmentation quality}

In this experiment, we compared our CCMF formulation to the AT-CMF
formulation and conventional graph cuts on the problem of image
segmentation, to determine if there were strong performance differences between
the formulations.  We expect that there would not be, since in
principle all three formulations are trying to ``minimize the cut'',
but have slightly different definitions of the cut length.  The
primary advantage that we expect from our formulation is in the
reduction of metrication artifacts (as compared to conventional graph
cuts) and speed/convergence (as compared to AT-CMF).  

Our experiment consists of testing the Graph Cut, AT-CMF, CP-TV, and
Combinatorial Continuous Max-Flow algorithms on a database with the same
seeds. We used the Microsoft `GrabCut' database available online
\cite{rother2004grabcut}, which is composed of fifty images provided with
seeds. From the seed images provided, it is possible to extract two different
possible markers for the background seed. Examples of such seeds are shown in
Fig.~\ref{DB_results_a}. Different convergence criteria are available for CCMF,
such as the duality gap and norms of the residuals. However, for the CMF
algorithm, we do not have any satisfying criteria. Bounding the number of non
binary occurrences of $P$ does not mean that the convergence is reached, because
of possible oscillations. An intermediate result after ten thousand iterations
may be significantly different from the result reached after one hundred
thousand. Consequently, we have run the AT-CMF algorithm until we were convinced
to have reached convergence, {\em i.e.} when $P$ was nearly binary and did not
change significantly when we doubled the number of iterations.  For half of the
images in the GrabCut database, AT-CMF algorithm required more than 20000
iterations to reach convergence, for a third of the images, more than 80000
iterations, and for 1/4 of the images, more than 160000 iterations. Binary
convergence was still not reached after even 500000 iterations for the rest of
the images (1/4), but we stopped the computation anyway. The
  Chambolle-Pock TV based algorithm for binary segmentation is provably
  convergent and benefits also from several possible convergence criteria. The
  convergence criteria used is a ratio of changed pixels from two successive
  intervals of iterations being smaller than an epsilon that we fixed to
  $10^{-7}$ in order to obtain satisfying results. In practice, 1/4 of the
  images converged in less than 60000 iterations and for the rest of them we
  increased the maximum number of iterations up to 200000. In contrast, CCMF
only needed 21 iterations on average, and never more than 27, to reach the
convergence criteria $||r_d|| < 1$ and $||\widehat{\eta}|| < 2$.  We noted that
for all methods, segmentation quality degraded quickly if these conditions were not respected.

\begin{figure}[bt]
\label{DB_results} 
\begin{center}
\begin{tabular}{ccc}
\includegraphics[width=0.32 \linewidth]{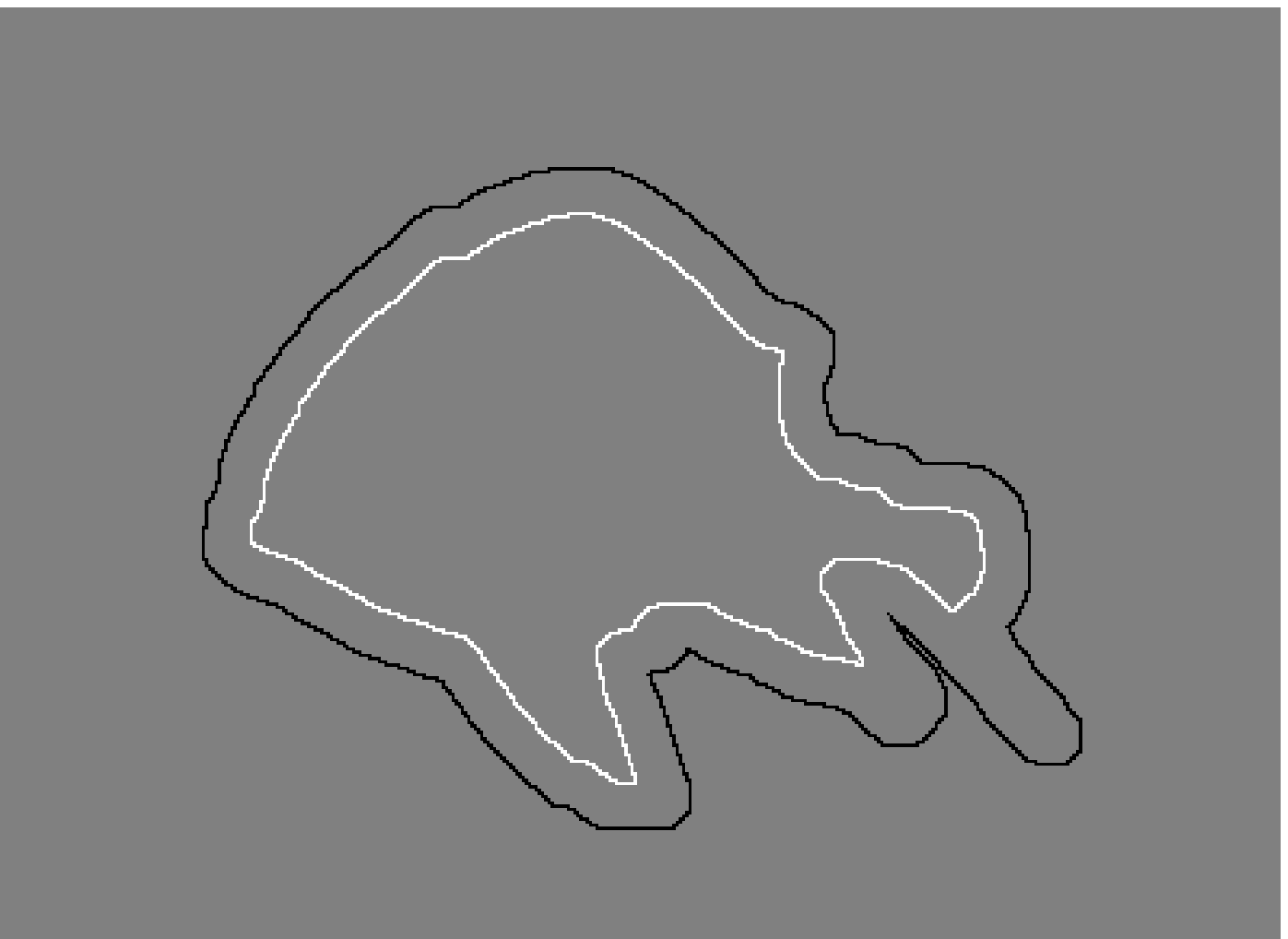}
\includegraphics[width=0.3 \linewidth]{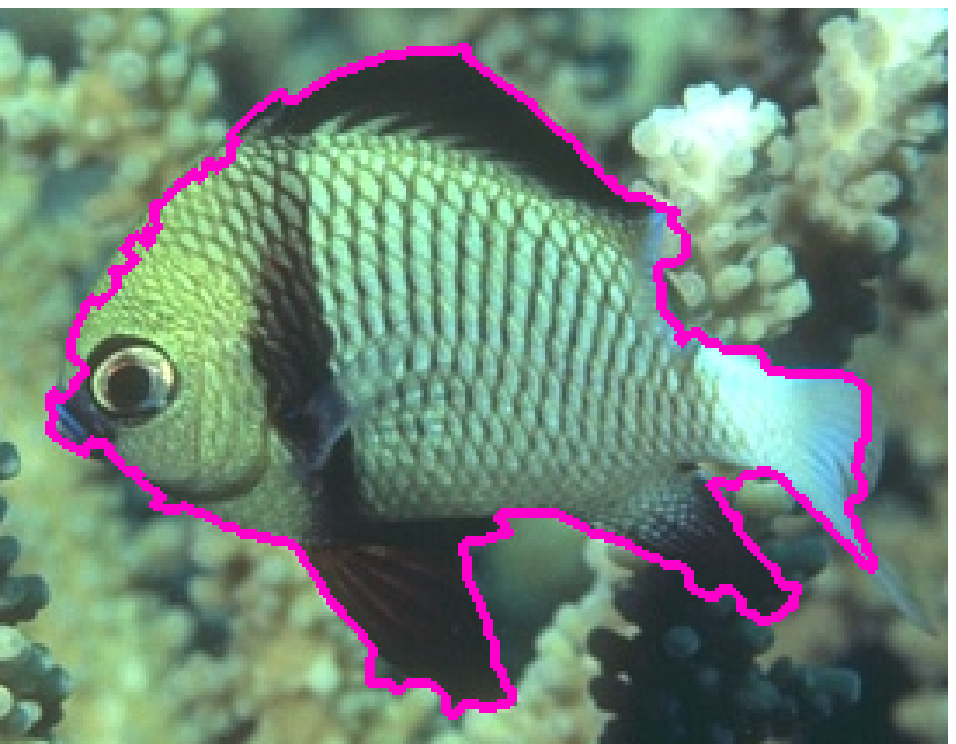}
\includegraphics[width=0.3 \linewidth]{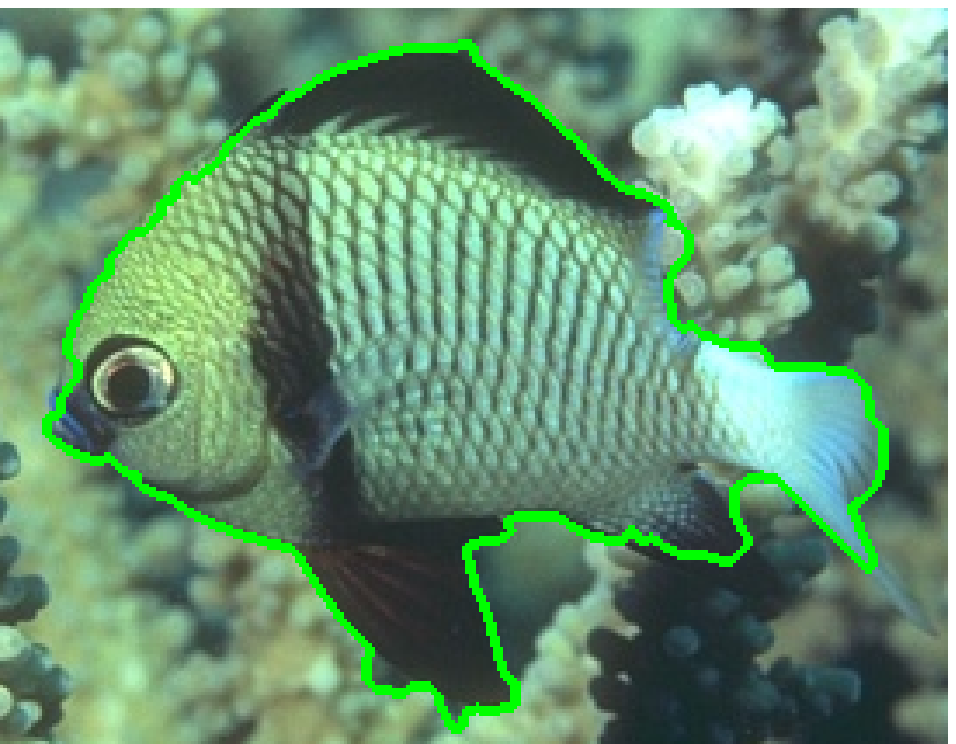}\\
\subfigure[\label{DB_results_a}]{\includegraphics[width=0.32 \linewidth]{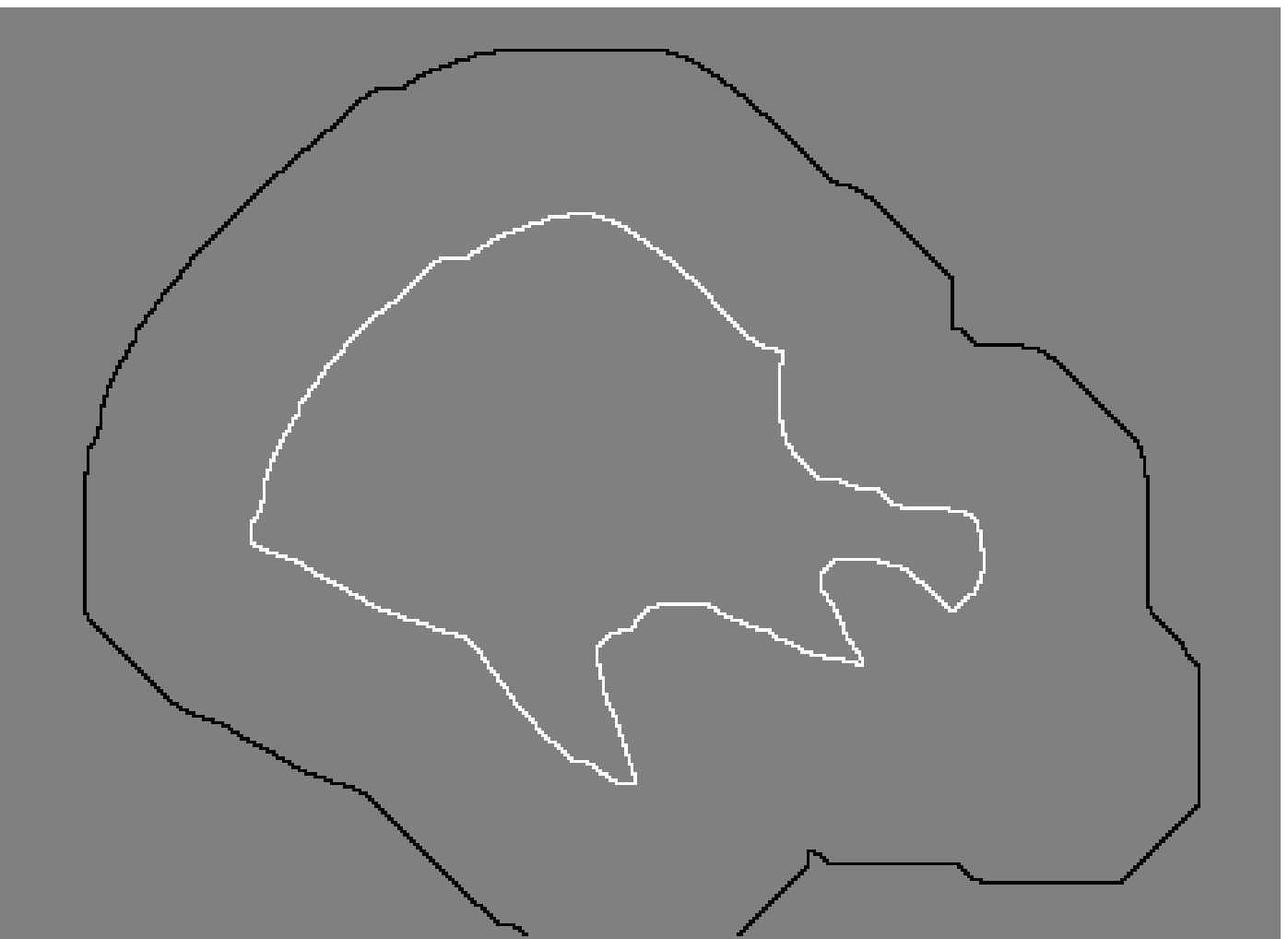}}
\subfigure[]{\includegraphics[width=0.3 \linewidth]{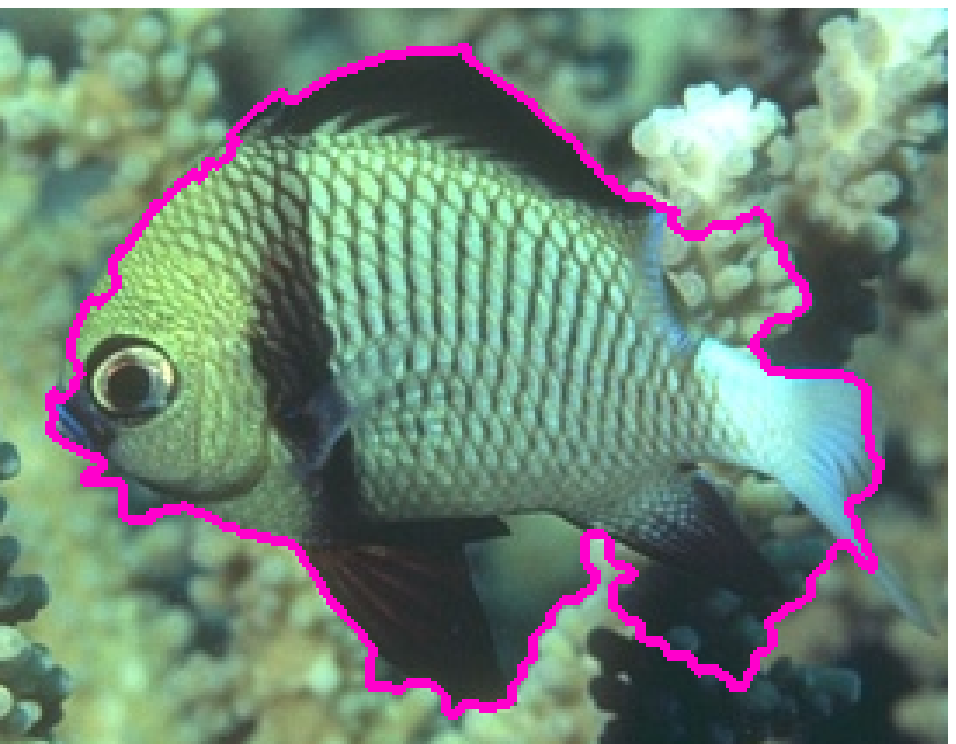}}
\subfigure[]{\includegraphics[width=0.3 \linewidth]{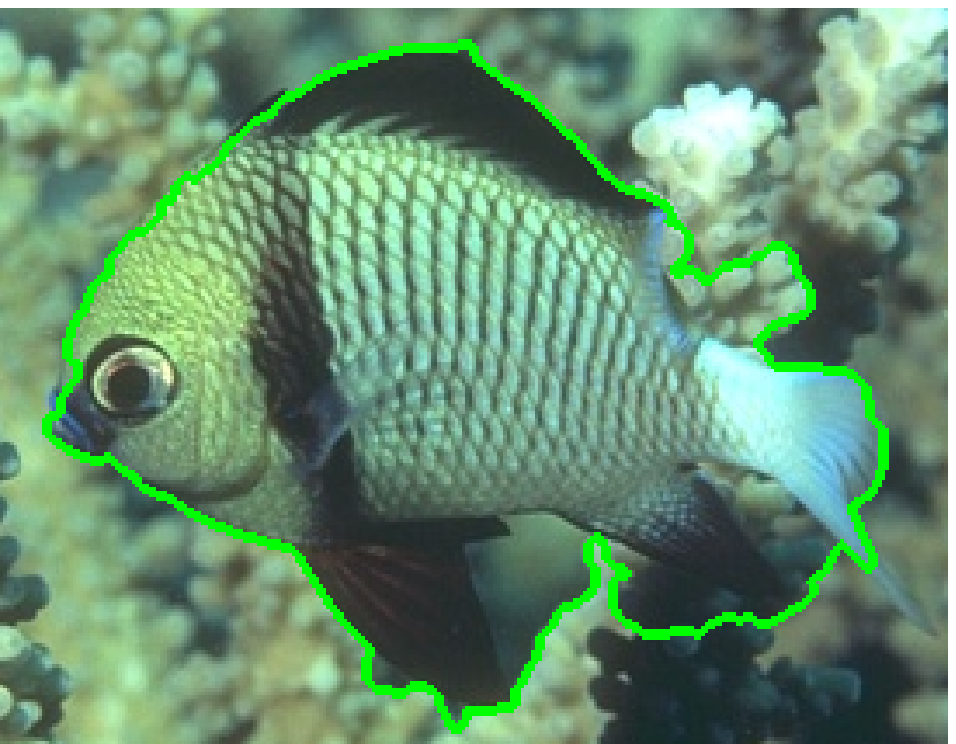}}
\end{tabular}
\end{center}
\caption {(a) : Seed images belonging to the first (top) and second (bottom) set
  of seeds provided by the GrabCut database. (b) Segmentations by Graph Cuts
  computed with seeds from the two sets, (c) Segmentations by CCMF.}
\end{figure}

\begin{table}
\label{DB_results_d}
\begin{center}
\begin{tabular}{c|c|c|c|c|c|}
\cline{2-6}
     & Dice Coeff. & GC & AT-CMF & CP-TV & CCMF \\
\cline{2-6}
\hline
\multicolumn{1}{|c|}{\multirow{3}{*}{First set of seeds}} & mean& 95.2 & 94.9 & 95.2  & 95.3 \\
\multicolumn{1}{|c|}{} & median & 96.2 & 96.1 &  96.3  & 96.3 \\  
\multicolumn{1}{|c|}{} & stand. dev.& 4.3 & 4.1 &  4.1  & 4.1 \\  
\hline
\hline
\multicolumn{1}{|c|}{\multirow{3}{*}{Second set of seeds}} & mean&
89.3 & 89.7 &  89.1  & 89.5 \\  
\multicolumn{1}{|c|}{} & median  & 91.2 & 92.0 &  91.7  &92.3  \\  
\multicolumn{1}{|c|}{} & stand. dev. & 8.4 & 7.7 &  8.5  & 8.6   \\  
\hline
\end{tabular}
\end{center}
\caption {Dice coefficient (percentage) computed
    between the segmentation mask and the ground truth image (provided
    by GrabCut database). The lines above the double bar show results
    for the first set of seeds, and the lines below the double bar
    show results obtained with the second set of seeds.}
\end{table}

Figure~\ref{DB_results_d} displays the performance results for these
algorithms. The Dice coefficient is a similarity measure between sets
(segmentation and ground truth), ranging from 0 to 1.0 for no
match and a perfect match respectively. All the tested algorithms show
very good results, with a Dice coefficient of 0.95--0.97 for the well
positioned seeds, and 0.89--0.92 for the second set of seeds, further away from
the objects. The CCMF and AT-CMF results are really close, and the mean
  is better than the GC  and the CP-TV results. 


\subsection{Extensions}
 
The primary focus of our experiments were to demonstrate that our CCMF
algorithm achieves a solution which does not exhibit metrication artifacts,
that it is fast with provable convergence and that the segmentation
quality is high.  In the remainder of this section, we show that the
algorithm can also incorporate unary terms and be equally applied in
3D. In fact, since CCMF is defined for an arbitrary graph, it could
even be used to perform clustering in this more generalized framework.
However, the benefit of avoiding gridding artifacts is less meaningful
when performing clustering on an arbitrary graph, and therefore we omit
any examples of this nature.

\subsubsection{Unary terms}

A simple modification of the transport graph $G$ permits the use of
unary terms to automatically specify objects to be segmented.  This
approach is similar to the use of unary terms in the graph cuts
computation \cite{Boykov2001Interactive}. The placement of unary terms
for adding data attachment constraints in the max-flow problem is
performed by adding edges linking every node of the lattice to the
source and to the sink. In the case of the CCMF problem, as weights
are defined on the nodes, we need to add intermediary nodes between
the grid and source on the one hand, and the grid and sink on the
other. However, since these intermediary nodes have just two edges
incident on them, these node weights are equivalent to edge weights,
meaning that our construction is equivalent to the use of unary terms
for AT-CMF that was pursued by Unger {\it et al.} \cite{ungerReport}. In our
case, we used weighted intermediary nodes simply to keep the CCMF
framework consistent. Considering that the original lattice is
composed of $n$ nodes, we add for each node an ``upper'' node linked
to the source and a ``lower'' node linked to the sink. An example of
this construction is shown in Fig.~\ref{fig_graph}.

\begin{figure}[ht]
\begin{center}
\begin{minipage}{0.32\textwidth}
\begin{center}
\includegraphics[width=0.6 \linewidth]{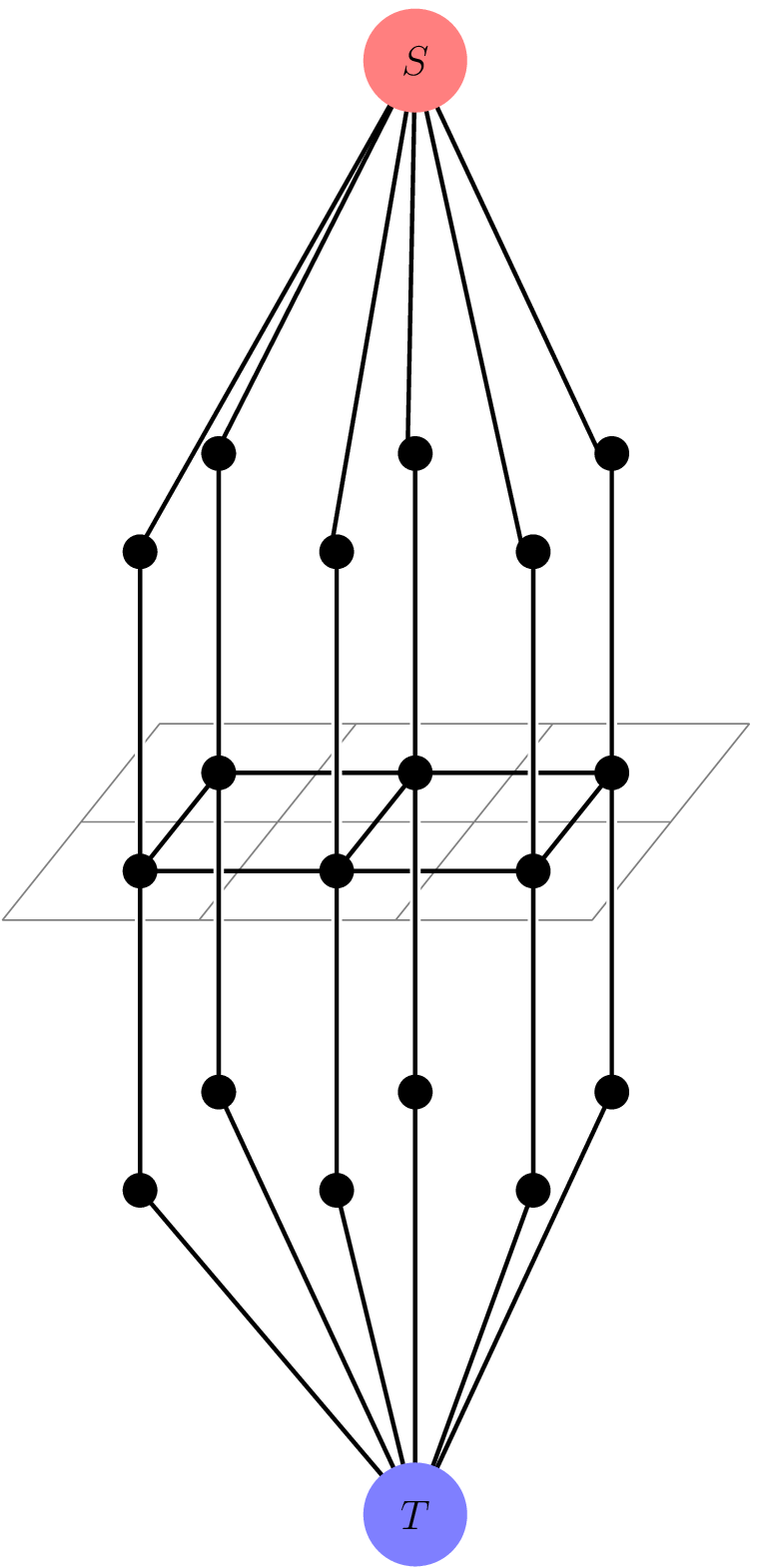}
\end{center}
\end{minipage}
\begin{minipage}{0.32\textwidth}
\begin{center}
\includegraphics[scale=0.615]{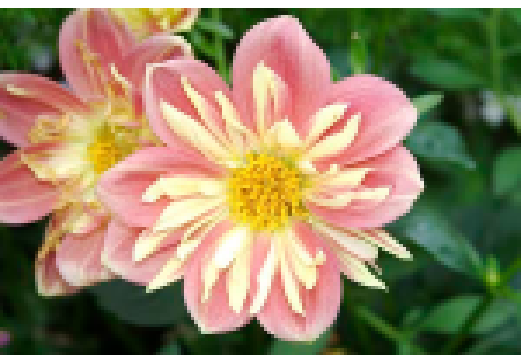}\\~\\
\includegraphics[scale=0.615]{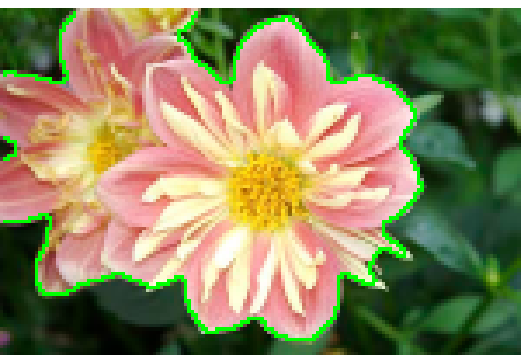}
\end{center}
\end{minipage}
\begin{minipage}{0.32\textwidth}
\begin{center}
\includegraphics[scale=0.5]{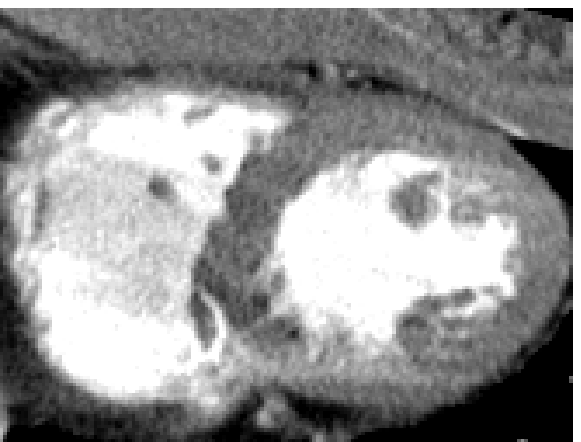}\\~\\
\includegraphics[scale=0.5]{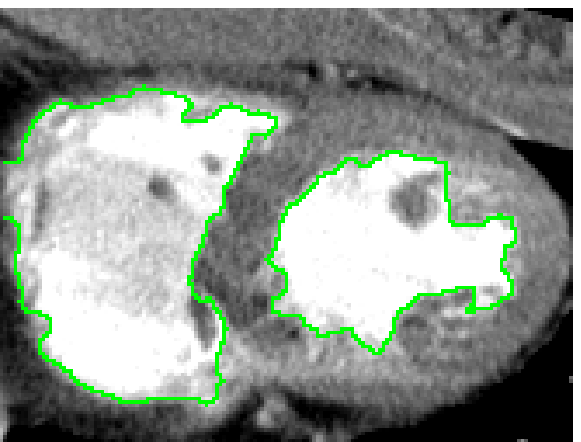}
\end{center}
\end{minipage}
\end{center}
\caption {Example of graph construction to incorporate unary terms for
  unsupervised segmentation. Each node is connected to a source node
  and sink node through an intermediary node which is weighted to
  reflect the strength of the positive and negative unary term.  This
  construction is then applied with a simple appearance prior to
  automatically segment two images.}
\label{fig_graph}
\end{figure}

The weights of the additional nodes may be set to reflect the node
priors for a particular application. For image segmentation, given
mean background and foreground color $BC$ and $FC$, we can set the
capacities of the background nodes to $g_i = \exp(-\beta
(BC_i-I_i)^2)$ and foreground nodes to $g_i = \exp(-\beta
(FC_i-I_i)^2)$.  Examples of results using these appearance priors are
shown on Fig.~\ref{fig_graph}.

\subsubsection{{\change Classification}}
  
\begin{figure}[bt]
\begin{center}
\begin{tabular}{cc}
\subfigure[ Network and true membership after split]{\includegraphics[width=0.45 \linewidth]{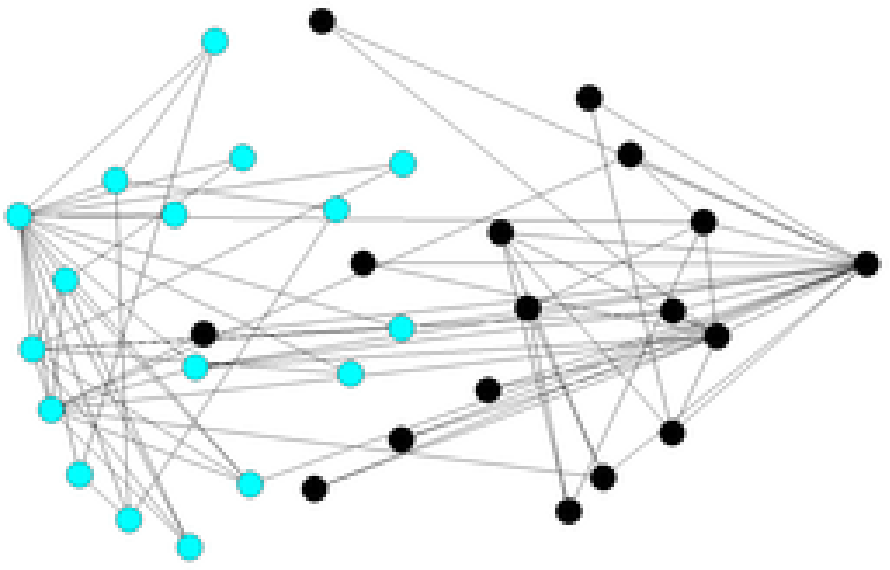}}&
\subfigure[Classification using CCMF]{\includegraphics[width=0.45 \linewidth]{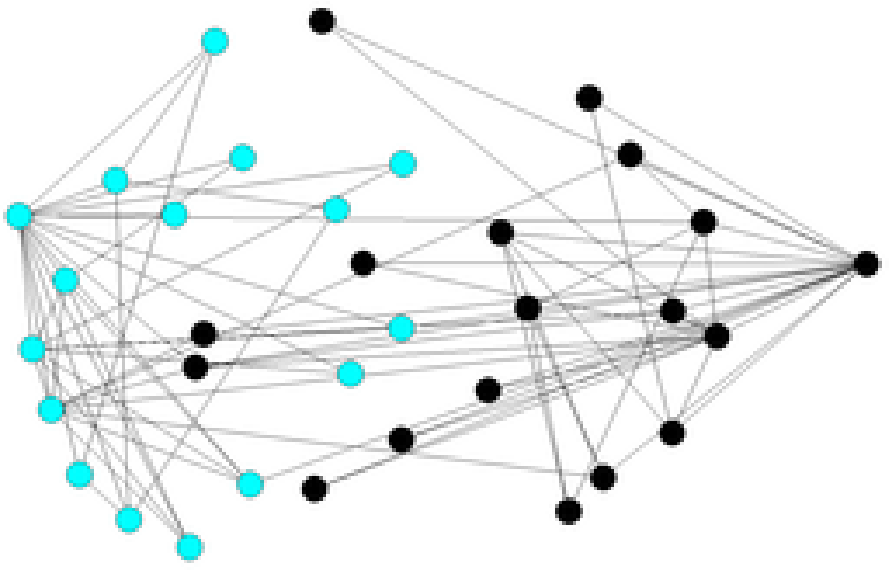}}
\end{tabular}
\end{center}
\caption {{\change Zachary's karate club network. The two leaders
    in conflict are represented by the left hand side and right hand
    side nodes. Just one node is misclassified with CCMF.  However,
    this node is known classically to be an unusual situation within
    the social network (see \cite{zachary}) which is not captured by any known
    algorithm.}}
\label{fig:classif} 
\end{figure}

{\change As the general CCMF formulation is defined on arbitrary
  graphs, it is possible to employ it in tasks beyond image
  segmentation. For instance CCMF may be employed to solve general
  problems of graph clustering, even on networks with no meaningful
  embedding such as social networks. We show a possible application in
  a classification example, considering the social network studied by
  Zachary \cite{zachary}. After the split of a university karate club
  due to a conflict between its two leaders, Zachary's goal was to
  study if it was possible to predict the side joined by the members,
  based only on the social structure of the club.  Classically, the
  graph is built by associating each member to a node, and edges link
  two members when special affinities are known between them.  As
  weights are associated here to the strength of coupling between members, different
  strategies are possible for assigning the weights onto the nodes,
  which is necessary for using CCMF. For example, a given node/member
  may be assigned by the mean of affinity with its neighboring members
  in the graph. This weighting strategy works well in this example
  (See Fig.~\ref{fig:classif}).  Another strategy would be to compute the
  solution in the dual graph, that is in the graph where nodes are
  replaced by edges and (weighted) edges are replaced by (weighted)
  nodes. We have shown in an example the ability of CCMF for
  classification, a task that can not be treated with finite
  element-based methods such as AT-CMF. Evaluating the performances of
  CCMF for classification may be a topic for future research.  }

\subsubsection{3D segmentation}

\begin{figure}[hb] 
\begin{center}
\begin{multicols}{2}{\includegraphics[width=0.81 \linewidth]{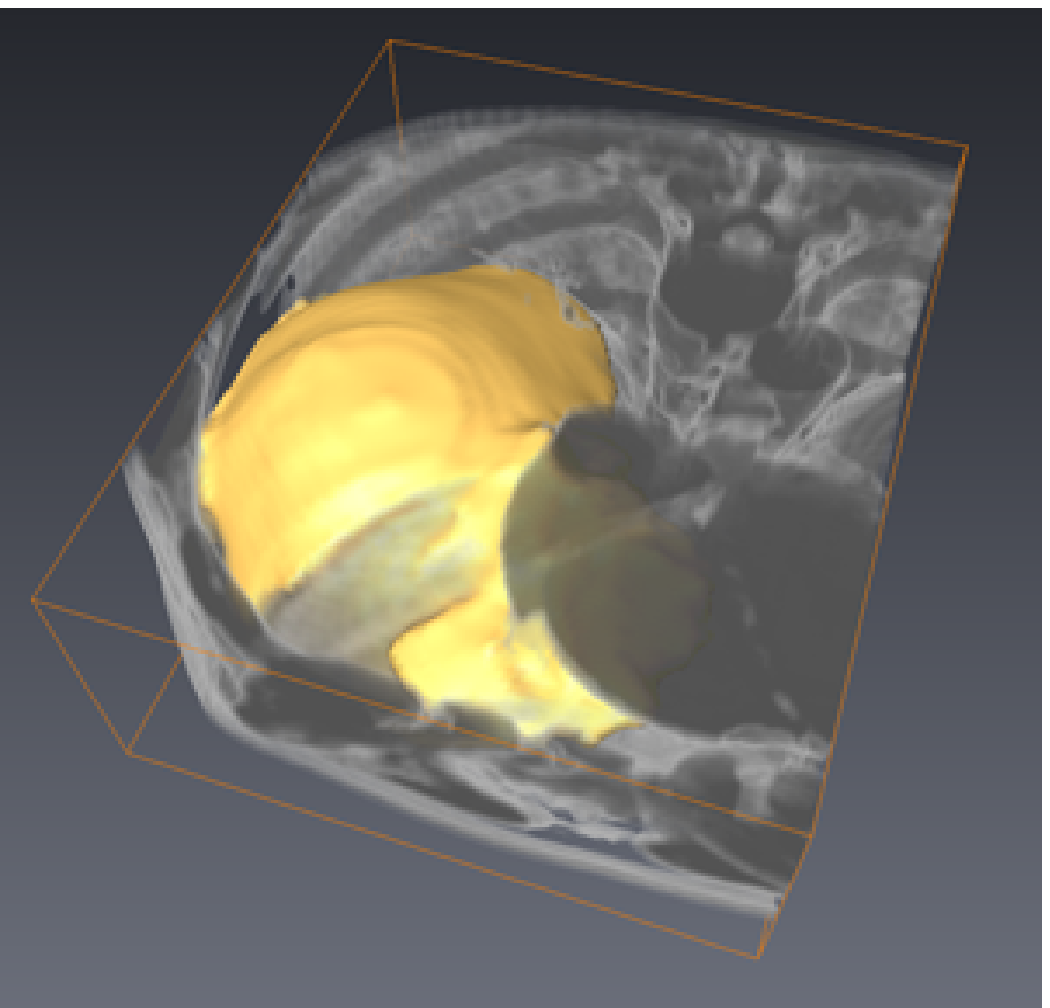}\\
 \begin{tabular}{ccc} 
 \includegraphics[width=0.3
  \linewidth]{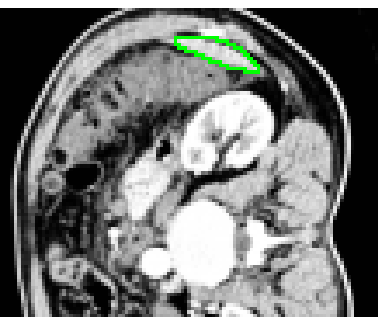}~~\includegraphics[width=0.3
  \linewidth]{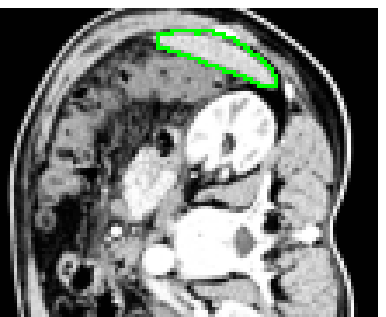}~~\includegraphics[width=0.3
  \linewidth]{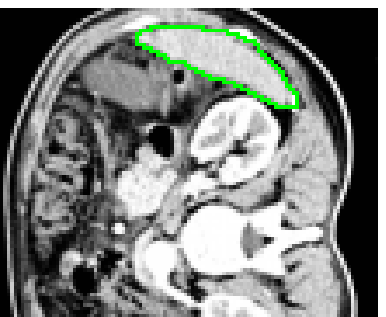}\\
\includegraphics[width=0.3
  \linewidth]{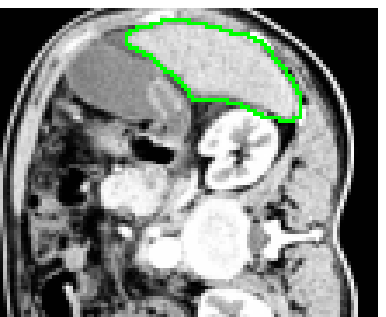}~~\includegraphics[width=0.3
  \linewidth]{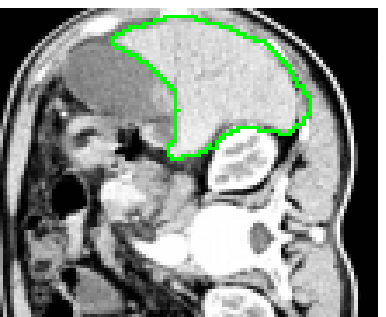}~~\includegraphics[width=0.3
  \linewidth]{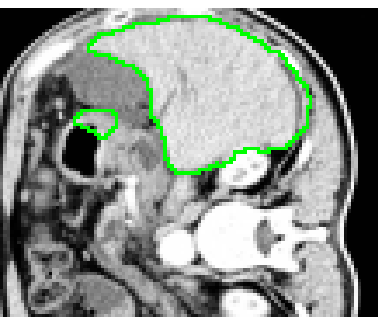}\\
\includegraphics[width=0.3
  \linewidth]{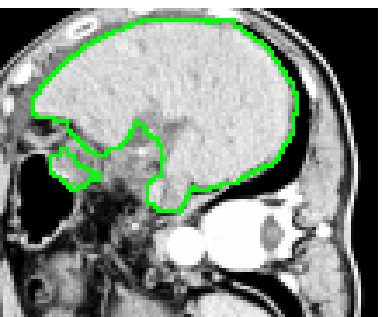}~~\includegraphics[width=0.3
  \linewidth]{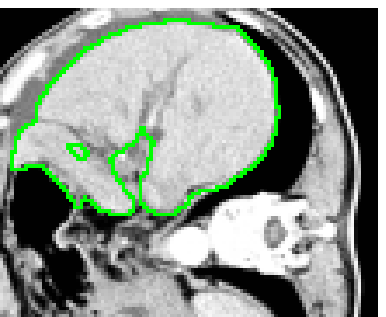}~~\includegraphics[width=0.3
  \linewidth]{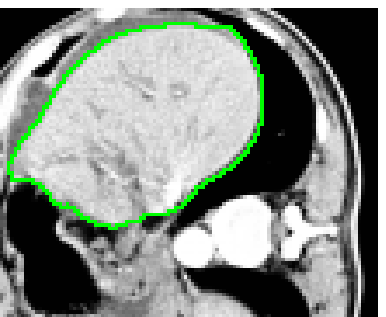}
\end{tabular}
}
\end{multicols}
\end{center} 
\caption{Liver segmentation in 3D by CCMF.}
\label{fig_liver}
\end{figure}

For 3D image segmentation, the minimal surface properties of CCMF
generate good quality results, as shown in Fig~\ref{fig_liver}. The
CCMF formulation applies equally well in 2D or 3D, since CCMF is
formulated on an arbitrary graph (which may be a 2D lattice, a 3D
lattice or an even more abstract graph).
In 3D, our CCMF implementation is suffering from memory limitations in the
direct solver we used, limiting its performances. Future work will
address this issue using a dedicated solver.




\section{Conclusion}

In this paper, we have presented a new combinatorial formulation of
the continuous maximum flow problem and a solution using an interior
point convex primal-dual algorithm. This formulation allows us to
optimize the maximum flow problem as well as its dual for
  which we provide an interpretation as a minimal surface
problem. This new combinatorial expression of continuous max-flow
avoids blockiness artifacts compared to graph cuts. Furthermore, the
formulation of CMF on a graph reveals that it is actually the fact
that the capacity constraints are applied to point-wise norms
  of flow {\emph{through nodes}} that allows us to avoid metrication
errors, as opposed to the conventional graph cut capacity constraint
{\emph{through individual edges}}. Additionally, it was
shown that our CCMF formulation provides a better approximation to the
analytic catenoid than the conventional AT-CMF discretization of the
continuous max-flow problem. {\change Contrary to graph cuts, we
  showed that CCMF estimates the
  true boundary length of objects. Finally, unlike finite-element based methods such as AT-CMF, the CCMF
  formulation is expressed for arbitrary graphs, and thus
  can be employed in a large variety of tasks such as classification.}

We provide in this paper an exact analytic expression of the dual problem,
convergence of the algorithm being guaranteed by the convexity of the problem.
In terms of speed, when an approximate solution is sufficient, our
implementation of CCMF in Matlab is competitive to the Appleton-Talbot approach,
which uses a system of PDEs, and a {\tt C++} implementation. The Appleton-Talbot
algorithm has the significant drawback of not providing a criterion for
convergence. In practice, this translates to long computation times when
convergence is difficult for the AT-CMF algorithm. In contrast, our
algorithm in this case is much faster, as we observe that the number of
iterations required for convergence does not vary much for CCMF (less than a
factor of two). The CCMF algorithm is simple to implement, and may be applied to
arbitrary graphs. Furthermore, it is straightforward to add unary terms to
perform unsupervised segmentation.

We have also compared our algorithm to known weighted combinatorial TV
minimization methods (CTV optimized with split-Bregman, CP-TV). We
have shown that our results are generally better for segmentation than
combinatorial TV-based methods, and that our implementation on CPU is
much faster and more predictable. Specifically, the number of
iterations required for convergence does not vary much for CCMF,
whereas it can vary of more than a factor of ten for CP-TV.  The deep
  study of the relationships between CCMF and combinatorial TV reveals
  that, in contrast with expectations from their duality 
    under restrictive assumptions in the continuous domain, their
  duality relationship is only weak in the combinatorial setting.
  In fact max-flow and total variation are different problems
    with different constraints, yielding different algorithms and
    different results. One key difference between max-flow and total
    variation is that max-flow algorithms were developed as
    segmentation algorithms, following the graph cuts framework. One
    consequence is that max-flow formulations have a null divergence
    objective, which is not present in total variation
    formulations. Null divergence can be viewed as a consequence or a
    necessity in order to obtain constant partitions almost
    everywhere, in particular binary ones. This difference is
    important because in the proposed CCMF framework we {\em impose} a
    tight null divergence constraint, which to our knowledge is novel
    for an isotropic formulation (graph-cuts have always had this
    constraint). AT-CMF and similar frameworks achieve null divergence
    if and when convergence is achieved. There is no null divergence
    constraint at all in total variation frameworks. As a consequence,
    while in practice it is possible to compare total variation and
    max flow formulations, they should be treated differently.
  However, the strong computational performance and
    segmentation quality results of CCMF as compared to TV (using
   the strongest known optimizations methods) suggests that
  it may be advantageous to apply our CCMF formulation to problems for
  which TV has proven effective (such as filtering).

Several further optimizations of CCMFs are possible, for instance
using multi-grid implementations, the possibility to use GPU to solve
the iterative linear system, the use of a dedicated solver for the
particular sparse linear system involved in the computation. 
 We also plan to compare the efficiency obtained with the interior point
 method to a first order algorithm for solving the CCMF problem.  

Ultimately, we hope to employ combinatorial continuous maximum flow as
a powerful segmentation algorithm which avoids metrication artifacts and
provides a fast solution with provable convergence.  Furthermore, we
intend to explore the potential of CCMF to optimize other energy
functions for which graph cuts or total variation have proved useful,
such as surface reconstruction or efficient convex filtering.


\bibliographystyle{splncs}
\bibliography{siam2010_CCMF}

\end{document}